\newcommand{\ele}[2][]{\ell_{#1}({#2})}
\newcommand{\eletilde}[2][]{\widetilde{\ell}_{#1}({#2})}
\renewcommand{\phi}{\varphi}
\renewcommand{\epsilon}{\varepsilon}
\newcommand{\R}{\mathbb{R}}
\newcommand{\X}{{X}}
\newcommand{\Z}{{Z}}
\renewcommand{\L}{{L}}
\newcommand{\z}{{z}}
\newcommand{\x}{{x}}
\newcommand{\xtilde}{\widetilde{\x}}
\newcommand{\xline}{\overline{\x}}
\newcommand{\etatilde}{{\widetilde{\eta}}}
\newcommand{\Ltilde}{\widetilde{L}}
\newcommand{\Xb}{{\boldsymbol{\X}}}
\newcommand{\Zb}{{\boldsymbol{\Z}}}
\newcommand{\xb}{{\boldsymbol{\x}}}
\newcommand{\zb}{{\boldsymbol{\z}}}
\newcommand{\betab}{{\boldsymbol{\beta}}}
\newcommand{\phib}{{\boldsymbol{\phi}}}
\newcommand{\gammab}{{\boldsymbol{\gamma}}}
\newcommand{\etab}{{\boldsymbol{\eta}}}
\newcommand{\xtildeb}{{\widetilde{\xb}}}
\newcommand{\etatildeb}{{{\widetilde{\etab}}}}
\newcommand{\elbo}{\mathcal{L}}
\newcommand{\std}{{\operatorname{std}}}
\DeclareMathOperator{\Normal}{\mathcal{N}}
\DeclarePairedDelimiterX{\infdivx}[2]{(}{)}{%
	#1 \delimsize\|\, #2%
}
\DeclareMathOperator{\KLoperator}{KL}
\newcommand{\KL}[2]{\KLoperator\infdivx{#1}{#2}}
\DeclareMathOperator{\Eoperator}{\mathbb{E}}
\newcommand{\E}[2][]{\Eoperator_{#1}\left[#2\right]}
\newcommand{\seq}[3]{\left\{#1_{#2}\right\}_{\ifx&#3&\else #2=1\fi}^{#3}}  %
\newcommand{\range}[2][]{{\ifx&#1&1, 2, \dots, #2\else#1_1, #1_2,\dots, #1_{#2}\fi}}
\DeclarePairedDelimiter\abs{\lvert}{\rvert}%
\newcommand{\norm}[2][]{{\left|\left|#2\right|\right|}_{#1}}
\DeclareMathOperator*{\argmin}{argmin}
\newcommand{\numberthis}{\addtocounter{equation}{1}\tag{\theequation}}
\newcommand{\spd}[2][]{\partial^{#1}_{#2}}
\newcommand*\dif{\mathop{}\!\mathrm{d}}
\newtheorem{theorem}{Theorem}[section]
\newtheorem{proposition}{Proposition}[section]
\theoremstyle{definition}
\renewenvironment{proof}[1][\proofname]{%
	\par\pushQED{\qed}\normalfont%
	\topsep6\p@\@plus6\p@\relax
	\trivlist\item[\hskip\labelsep\bfseries#1\@addpunct{.}]%
	\ignorespaces
}{%
	\popQED\endtrivlist\@endpefalse
}
\newcommand{\minusmark}{-}%
\newcommand{\red}[1]{{\color{red} #1}}
\newcommand{\green}[1]{{\color{green!50!black!100} #1}}
\newcommand{\needcite}[1][]{\textbf{\color{red} [cite\ifthenelse{\equal{#1}{}}{}{~{#1}}]}\xspace}
\newcommand{\edit}[2][]{{\textcolor{orange}{\sout{#2}}}\ifthenelse{\equal{#1}{}}{}{\ifthenelse{\equal{#2}{}}{}{$\rightarrow$}{\textcolor{green!40!black!90}{#1}}}}
\definecolor{gt}{rgb}{0.4, 0.4, 0.4}
\definecolor{std}{rgb}{0.9019607843137255, 0.6705882352941176, 0.00784313725490196}
\definecolor{std-gamma}{rgb}{0.4588235294117647, 0.4392156862745098, 0.7019607843137254}
\definecolor{ours}{rgb}{0.4, 0.6509803921568628, 0.11764705882352941}
\definecolor{max}{rgb}{0.9, 0.1, 0}
\definecolor{iqr}{rgb}{1, 0.5, 0}
\definecolor{ours-none}{rgb}{0.10588235294117647, 0.6196078431372549, 0.4666666666666667}
\definecolor{ours-bern}{rgb}{0.9058823529411765, 0.1607843137254902, 0.5411764705882353}
\title{Lipschitz standardization for multivariate learning}
\author{%
	Adri\'an Javaloy\\ %
	Probabilistic Learning Group\\
	MPI for Intelligent Systems\\
	Tübingen, Germany \\
	\texttt{ajavaloy@tue.mpg.de} \\
	\And
	Isabel Valera\\
	Probabilistic Learning Group\\
	MPI for Intelligent Systems\\
	Tübingen, Germany \\
	\texttt{ivalera@tue.mpg.de}
}
\newcommand{\lipgamma}{\emph{\textbf{\textcolor{ours}{lip-gamma}}}\xspace}
\newcommand{\stdnone}{\emph{\textbf{\textcolor{std}{std-none}}}\xspace}
\newcommand{\lipnone}{\emph{\textbf{\textcolor{ours-none}{lip-none}}}\xspace}
\newcommand{\lipbern}{\emph{\textbf{\textcolor{ours-bern}{lip-bern}}}\xspace}
\newcommand{\stdgamma}{\emph{\textbf{\textcolor{std-gamma}{std-gamma}}}\xspace}
\newcommand{\maxnone}{\emph{\textbf{\textcolor{max}{max-none}}}\xspace}
\newcommand{\iqrnone}{\emph{\textbf{\textcolor{iqr}{iqr-none}}}\xspace}
\begin{document}

	\maketitle

\begin{abstract}

Probabilistic learning is increasingly being tackled as an optimization problem, with  gradient-based approaches as predominant methods. 
When modelling multivariate likelihoods, a usual but undesirable outcome is that the learned model fits only a subset of the observed variables, overlooking the rest.
In this work, we study this problem through the lens of multitask learning (MTL), where similar effects have been broadly studied. 
While MTL solutions do not directly apply in the probabilistic setting---as they cannot handle the likelihood constraints---we show that similar ideas may be leveraged during data preprocessing.  
  First, we show that data standardization often helps under common continuous likelihoods, but it is not enough in the general case, specially under mixed continuous and discrete likelihood models. 
In order for \emph{balance multivariate learning}, we then propose  a novel data preprocessing, \emph{Lipschitz standardization}, which balances the local Lipschitz smoothness across variables. %
Our experiments on real-world datasets show that Lipschitz standardization leads to more accurate multivariate models than the ones learned using existing data preprocessing techniques.
The models and datasets employed in the experiments can be found in \href{https://github.com/adrianjav/lipschitz-standardization}{https://github.com/adrianjav/lipschitz-standardization}.

\end{abstract}

\section{Introduction} \label{sec:intro}

\begin{wrapfigure}[12]{r}{.5\textwidth}
	\vspace*{-4em}
	\centering
	\begin{minipage}[c]{.5\columnwidth}
	    \centering
	    \includegraphics[width=.49\textwidth, keepaspectratio]{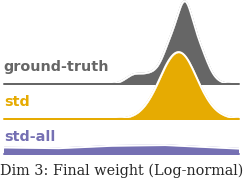} %
	    \includegraphics[width=.49\textwidth, keepaspectratio]{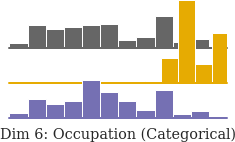}%
	    \caption{Marginals of continuous (left) and discrete (right) variables from the Adult dataset obtained from a trained VAE. Top to bottom: \textcolor{gt}{\textbf{ground-truth}}, actual data; \textcolor{std}{\textbf{std}}, continuous variables were standardized; \textcolor{std-gamma}{\textbf{std-all}}, everything was standardized after replacing the discrete distributions by continuous approximations.\label{fig:intro-1}}
	\end{minipage}%
\end{wrapfigure} %

In the past few years %
gradient-based optimization approaches are becoming the gold standard for probabilistic learning.
Representative examples of this trend include black box variational inference (BBVI)~\citep{ranganath2013black} and  Variational Autoencoders (VAE)~\citep{diederik2014auto}.
However, when such methods are applied to real-world datasets, one often encounters issues such as numerical instabilities. %

As an illustrative example, we learn a VAE on the \textit{Adult dataset} from the UCI repository~\citep{Dua:2019}, where every observation is represented by a set of twelve mixed continuous and discrete variables, with heterogeneous data distributions (see Figure~\ref{fig:intro-1}). 
As it is a common practice, we prevent numerical issues by standardizing the continuous variables prior to training the model. 
However,  as shown in Figure~\ref{fig:intro-1}, while the learned model does a reasonable job at fitting the continuous variable \textit{Final weight},  it results in a poor fit of the discrete variable \textit{Occupation}.
Since discrete data seem  cumbersome to work with, we then rely on a continuous approximation of these variables \textit{and standardize every variable} to learn the VAE. Once the VAE is learned, we use the learned parameters to recover the parameters of the discrete likelihoods. 
In this case, illustrated in the bottom row of Figure~\ref{fig:intro-1},  the VAE does a better job at capturing the \textit{Occupation} but at the price of a poor fitting of the \textit{Final weight}.
In order to understand the source of this issue,  we need to dive deeper into the problem formulation.
In short, the objective function of the VAE can be written as the sum of per-variable losses, i.e., $\elbo = \sum_d \elbo_d$, and thus be interpreted as a  multitask learning (MTL) problem--where different tasks (variables, in our case) compete for the model parameters during learning.  
In this context, previous work has shown that disparities in the gradient magnitudes across tasks, may determine which tasks the model prioritizes during training~\citep{ruder2017overview}. 
Due to the more restrictive nature of probabilistic learning, however, extant solutions from the MTL literature---e.g., GradNorm~\citep{chen2017gradnorm}---do not directly apply, as the likelihood would not integrate to one anymore.

\begin{wrapfigure}[9]{r}{.5\textwidth}
	\vspace*{-3em}
	\centering
	\begin{minipage}[c]{.5\columnwidth}
  \centering
  \includegraphics[width=.49\textwidth, keepaspectratio]{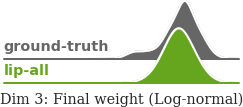} %
  \includegraphics[width=.49\textwidth, keepaspectratio]{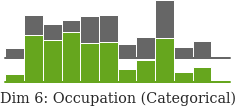}%
  \caption{Same setting as in Figure~\ref{fig:intro-1} where now \textcolor{gt}{\textbf{ground-truth}} is the actual data and \textcolor{ours}{\textbf{lip-all}} refers to the variable fittings obtained after preprocessing with \textit{Lipschitz standardization}, fitting well every variable.\label{fig:intro-2}}
	\end{minipage}%
\end{wrapfigure} %

In this paper, we rely on BBVI as showcase of gradient-based probabilistic learning to show that {the solution resides in the data itself.}
Specifically, in Section~\ref{sec:fair-learning}, we first formalize the concept of \textit{balanced multivariate learning}, which aims to ease that all the observed variables are learned at the same rate,  %
and thus no variable is overlooked. %
In this context, we are able to study why data standardization often helps towards balanced learning when applied to common continuous likelihood functions, such as the Gaussian distribution (Section~\ref{sec:effect}).  
Unfortunately, as shown in our example above, this is not always the case. 
Then, based on our analysis, we propose \textit{Lipschitz standardization} (Section~\ref{sec:lip-std}), a novel preprocessing method that reshapes the data to equalize the  local Lipschitz smoothness of the log-likelihood functions across all continuous and discrete variables. 
As illustrated in Figure~\ref{fig:intro-2}, \textit{Lipschitz standardization} facilitates a more accurate fitting by balancing learning across all variables.  

Finally, we test Lipschitz standardization prior to learning different probabilistic models (mixture models, probabilistic matrix factorization, and VAEs) on six real-world datasets (see Section~\ref{sec:exps}). %
Our results show the effectiveness of the proposed method which leads to a more balanced learning across dimensions, greatly improving the final performance across dimensions on most settings, being in the worst case as good as the best of the considered baseline preprocessing methods, including data standardization.

\section{Problem Statement} \label{sec:ps} 

Let us assume a set of $N$ observations \hbox{$\Xb = \{\xb_n\}_{n=1}^{N}$}, each with $D$ different features \hbox{$\xb_n = \{\x_{nd}\}_{d=1}^D$}.
Following \citet{hoffman2013stochastic}, we consider that 
the joint distribution over the observed variables $\Xb$, local latent variables \hbox{$\Zb = \{ \zb_n \}_{n=1}^{N}$}, and global latent variables  $\betab$, is given by the fairly simple---yet general---latent variable model
    $p(\Xb, \Zb, \betab) = p(\betab) \prod_{n=1}^N p(\xb_n|\zb_n, \betab) p(\zb_n)$.
To account for mixed likelihood models, we further assume that the likelihood factorizes per dimension as %
\begin{equation}\label{eq:lik}
    p(\xb_n|\zb_n, \betab) 
    = \prod_{d=1}^D p_d(\x_{nd}; \etab_{nd}),
\end{equation} %
where  $\etab_{nd}$ denotes the likelihood parameters given by the latent variables $\zb_n$ and $\betab$ for each $\x_{nd}$, $\etab_{nd}(\zb_n, \betab)$. 
Furthermore, we rely on BBVI~\citep{ranganath2013black} 
to approximate the posterior distribution over the latent variables, %
$p(\Zb, \betab | \Xb)$. For simplicity in exposition, we assume a mean-field variational distribution family of the form \hbox{$q(\Zb, \betab) = q_{\gammab_\betab}(\betab) \prod_{n=1}^N q_{\gammab_n}(\zb_n)$}, where {$\{\gammab_n\}_{n=1}^N$} and $\gammab_\betab$ are respectively the local and global variational parameters. We denote by $\gammab$ the set of all variational parameters. 
BBVI relies on (stochastic) gradient ascent to find the parameters that maximize the evidence lower bound (ELBO),\footnote{Or equivalently, that minimize the Kullback-Leibler divergence from $q_{\phi}(\Z, \betab)$ to $p(\Z, \betab | \X)$~\citep{blei2017variational}.}  i.e., 
\begin{equation} 
    \elbo(\Xb, \gammab) = \sum_{d=1}^D \E[q_{\gammab}(\Zb, \betab)]{\log p_d(\xb_d|\Zb,\betab)} -  \KL{q_{\gammab}(\Zb, \betab)}{p(\Zb,\betab)}. \label{eq:elbo}
\end{equation}
BBVI performs iterative updates over the variational (global and local) parameters of the form \hbox{$\gammab^t= \gammab^{t-1} + \alpha {\nabla}_{\gammab}\elbo(\Xb, \gammab, \phib)$} where $t$ is the current step in the optimization procedure.
We further assume that the reparametrization trick~\citep{diederik2014auto} can be applied on the latent variables (i.e., $\Zb, \betab = f(\gammab, \epsilon)$, where $\epsilon$ is a noise variable), such that the gradient of Eq.~\ref{eq:elbo} can be computed as:
\begin{equation} 
	\nabla_\gammab \elbo(\Xb, \gammab) = \sum_{d=1}^D \E[\epsilon]{\nabla_\gammab \ele[d]{\etab_d(\gammab)}} - \nabla_\gammab \KL{q_{\gammab}(\Zb, \betab)}{p(\Zb,\betab)}, \label{eq:gradient}
\end{equation}
where  we denote the log-likelihood $\log p_d(\xb_d; \etab_d(\gammab))$ by $\ele[d]{\etab_d(\gammab)}$, making explicit the dependency of the log-likelihood evaluation to the variational parameters $\gammab$ through the likelihood parameters $\etab$ %
while making implicit its dependency with $\xb_d$ and $\epsilon$.

A closer look to Eq.~\ref{eq:gradient} shows that each dimension in the data contributes to the overall gradient computation in an additive way. 
Therefore, the gradient evaluation with respect to the shared parameters---and in consequence the learning process---can be monopolized by a small subset of dimensions if their gradients dominate this sum in Eq.~\ref{eq:gradient}. 
In other words, while the objective is to capture the joint distribution of \emph{all dimensions}, differences in the gradient evaluation across different observed variables (e.g., Gaussian vs. multinomial) may result in a latent variable model that poorly fits a subset of the observed dimensions, as we already observed in the example of Section~\ref{sec:intro}.%

\subsection{Connections with multitask learning} 
The gradient computation in Eq.~\ref{eq:gradient}---and the undesirable scenario described in the above---may result familiar to those readers knowledgeable about MTL literature. 
In MTL it is common to have a set of shared parameters $\gammab$ whose gradient are of the form $\nabla_\gammab \elbo = \sum_d \nabla_\gammab \elbo_d$, where the sum is taken over all the tasks and each $\elbo_d$ is the loss function of a particular task.
When great disparities exist between task gradients during learning, the resulting model may poorly perform on  some tasks, an effect attributed to the competition between tasks for the shared parameters and known as \emph{negative transfer}~\citep{ruder2017overview}.
Hence, {the (variational) inference problem stated in Eq.~\ref{eq:elbo} may also be interpreted as a (more restrictive) MTL problem} where the input variables play the role of tasks, and the inference parameters are shared.

Given a set of fixed tasks, the most common approach in MTL is to tackle the previous problem using adaptive solutions~\citep{chen2017gradnorm, kendall2018multi, guo2018dynamic}.
These solutions add a set of weights to the loss function, $\elbo = \sum_d \omega_d \elbo_d$, and dynamically change their value---based on different criteria---so that the magnitude of each task gradient $\nabla_\gammab \elbo_d$ is comparable to the ones of other tasks.%

Unfortunately, this type of solutions cannot be applied in the probabilistic setting since, as we mentioned before, we face a more restrictive problem. Specifically, by adding this set of weights in Eq.~\ref{eq:elbo}, we would also modify the likelihood in Eq.~\ref{eq:lik}, which would no longer integrate to one as required.

\subsection{Balanced multivariate learning}\label{sec:fair-learning}

In  variational inference, or more generally, in approximate Bayesian inference, we aim to accurately capture the posterior distribution of the latent variables explaining the joint distribution over {all the observed variables}, and not just a subset of them.
Ideally, we want to follow a \emph{balanced multivariate learning process}, %
where the normalized likelihood improvement per iteration $t+1$ is the same for all dimensions, i.e., 
\begin{equation}\label{eq:ideal}
	\frac{ \ele[d]{\etab_d(\gammab^{t+1})} - \ele[d]{\etab_d(\gammab^{t})}}{\ele[d]{\etab_d(\gammab^{0})}} = C^t,
\end{equation}
for $d=\range{D}$, where $\gammab^{0}$ denotes the initialization of the variational parameters, and $C^t$ the constant improvement at step $t$ for all dimensions. 

This is to the best of our knowledge the first time that balanced learning is properly defined, but its relevance has been acknowledge in prior MTL work (e.g., Eq.~(6) of \citealp{milojkovic2019multi}). 
Of special interest is GradNorm~\citep{chen2017gradnorm}, an adaptive solution whose weights are tuned to ``dynamically adjust gradient norms so different tasks train at similar rates'', including ${\ele[d]{\etab_d(\gammab^{t+1})}}/{\ele[d]{\etab_d(\gammab^{0})}}$ in their formulation.
Unfortunately, Eq.~\ref{eq:ideal} turns out to be an unrealistic goal for the scope of this work.

To find a more feasible objective, we  focus on the class of $L$-smooth functions, which is the broadest class  of functions with convergence guarantees in gradient descent.
A function $\ele{\gammab}$ is $L$-smooth on $Q$ with respect to $\gammab\in Q$ if it is twice-differentiable and, for any $\boldsymbol{a}, \boldsymbol{b} \in Q$, it holds that: 
\begin{equation} \label{eq:lipschitz}
	\norm{\nabla_\gammab \ele{\boldsymbol{a}} - \nabla_\gammab \ele{\boldsymbol{b}}} \leq L~\norm{\boldsymbol{a} - \boldsymbol{b}}.
\end{equation} 
For such class of functions, there exist theoretical results on the convergence rate to a critical point as a function of the Lipschitz constant $L$ and number of steps $T$~\citep{nesterov2018lectures}. 
Using our notation, this rate can be written as
$\min_{t=\range{T}}\norm{\nabla_\gammab \ele[d]{\etab_d(\gammab^t)}} = O(\sqrt{{L}/{T}})$. %
Note that this implies $\norm{\nabla_\gammab \ele[d]{\etab_d(\gammab^t)}}\rightarrow 0$ as $t\rightarrow\infty$, and in turn, \hbox{$\norm{\nabla_\gammab \ele[d]{\etab_d(\gammab^{t+1})} - \nabla_\gammab \ele[d]{\etab_d(\gammab^t)}}\rightarrow 0$}. %
We can thus replace Eq.~\ref{eq:ideal} by
\begin{equation}\label{eq:no-so-ideal}
\frac{\norm{\nabla_\gammab \ele[d]{\etab_d(\gammab_d^{t+1})} - \nabla_{\gammab} \ele[d]{\etab_d(\gammab_d^t)}}}{\norm{\nabla_\gammab \ele[d]{\etab_d(\gammab_d^0)}}} = C^t,%
\end{equation}
which instead focuses on the difference between consecutive gradients to be proportionally equal across dimensions. 
Finally, assuming a good parameter initialization $\gammab^{0}$ such that the initial gradient magnitudes are comparable across dimensions, we can consider constant the denominator from Eq.~\ref{eq:no-so-ideal} as well.
As a result, forcing every dimension to be \hbox{$L$-smooth}%
, i.e., 
\begin{equation}
\norm{\nabla_\gammab \ele[d]{\etab_d(\gammab^{t+1})} - \nabla_{\gammab} \ele[d]{\etab_d(\gammab^t)}} \leq  L~\norm{\gammab^{t+1} - \gammab^{t}} %
\end{equation}
turns out to be a weaker version of Eq.~\ref{eq:no-so-ideal}, whose goal is to ease a more balanced multivariate learning process. 

 In the following section, we study the impact of data standardization on the learning process. To this end, we show the relationship between the Lipschitz constants of the likelihood functions evaluated on the original and the standardized data.   We then propose an estimator of the (local) Lipschitz constant, which 
 allows us to show that, while data standardization may help,  unfortunately in some cases is may counterproductive for balanced multivariate learning.

\section{The effect of standardization} \label{sec:effect}

Preprocessing methods (e.g., standardization) are widely used in statistics and machine learning.
However, there is a priori no way of deciding which one to use \citep{gnanadesikan1995weighting, juszczak2002feature, milligan1988study}. 
In distance-based machine learning methods, e.g. clustering, the effectiveness of these two methods can be readily understood since they bring all the data into a similar range, making the distance between points comparable across dimensions~\citep{aksoy2001feature}. 
In other approaches, such as maximum likelihood or variational inference, the distance argument becomes less convincing,\footnote{In the Bayesian framework, one may also argue that standardization eases the prior selection process (even for those random variables indirectly related with the data), improving the overall performance of the algorithm.} since explicit distance between observations is no longer evaluated. 
Another argument is that they usually improve numerical stability by moving the data, and thus the model parameters, to a well-behaved part of the real space. 
Since computers struggle to work with tiny and large values, this would have an inherent effect in the learning process.

In this section, we study the impact that dimension-wise data preprocessing, specifically scaling transformations of the form $\xtilde = \omega \x$, has on BBVI as an example of Bayesian inference methods based on first order optimization. %
We choose scaling transformations since: i) they preserve important properties of the data distribution, such as domain and tails; and ii) they are broadly used in practice~\citep{han2011data}.
Note that as shifting the data, $\xtilde = \x - \mu$, may violate distributional restrictions (e.g., non-negativity), we  assume that the data may have been already shifted prior to the likelihood selection.
Specifically, our main focus is on three broadly-used data scaling methods:
\begin{align}
\textbf{Standardization: } \xtilde_{nd} = x_{nd}/ \std_d, \quad
\textbf{Normalization: } \xtilde_{nd} = x_{nd} / {\max}_d, \quad
\textbf{Interquartile range: } \xtilde_{nd} = x_{nd} / {\operatorname{iqr}}_d, \nonumber
\end{align}
where $\std_d$, ${\max}_d$, and $\operatorname{iqr}_d$ denote the empirical standard deviation, absolute maximum, and interquartile range of the $d$-th dimension, respectively.

Next, we introduce a novel perspective on the effect of data scaling in inference methods based on first-order optimization.
In a similar way as~\citet{santurkar2018does} showed that batch normalization~\citep{ioffe2015batch} smooths out the  optimization landscape of the loss function, we show that data standardization often 
 smooths out the log-likelihood optimization landscape in a similar way across dimensions.
Importantly, by  applying the chain rule to the gradient computation, i.e., \hbox{$\nabla_\gammab \ele{\etab(\gammab)} = \nabla_\etab \ele{\etab} \cdot \nabla_\gammab \etab$}, we can focus on the data-dependent part, the likelihood gradient $\nabla_\etab \ele{\etab}$.\footnote{We assume the model-dependent part $\nabla_\gammab \etab$ to be similar across dimensions.} 
In the following, we denote by \hbox{$\eletilde[d]{\etatildeb_d} := \log p_d(\xtildeb_d; \etatildeb_d)$}  the likelihood function (with parameters $\etatildeb_d$) evaluated on the scaled data.%

\subsection{Scaling the exponential family}
\label{sec:EF}
Henceforth, we consider each dimension of the observed data to be modeled by a member of the exponential family, i.e., 
\begin{equation}
p_d(\x_{nd} ; \etab_{nd}) = h(\x_{nd}) \exp{\left[\etab_{nd}^\top T(x_{nd}) - A(\etab_{nd})\right]},
\end{equation}
where $\etab_{nd}(\zb_n, \betab)$ denotes the natural parameters parameretized by the latent variables, $T(\x)$ the sufficient statistics, $h(\x)$ is the base measure, and $A(\etab)$ the log-partition function. 
Note that both $\etab$ and $T(x)$ are vectors of size $I_d$. %
Working with the exponential family let us draw one really useful relation between scaled and original data:

\begin{proposition}[Simplified] \label{prop:grad-exp}
	Let $p(\x; \etab)$ be a member of the exponential family where $\x\in\R$ and $\etab\in\R^I$. Besides, let us define $\xtilde := \omega\x$ for a given  $\omega\in\R$. Then, if every sufficient statistic can be factorized as {$T_i(\xtilde) = f_i(\omega)T_i(\x) + g_i(\omega)$}, the following holds: 
	\begin{equation}
	\spd[j]{\etatilde_i} \log p(\xtilde, \etatildeb) = f_i(\omega)^j~\spd[j]{\eta_i} \log p(\x; \etab),
	\end{equation} %
	where $\spd[j]{\eta_i}$ and $\spd[j]{\etatilde_i}$ denote the $j$-th partial derivative with respect to $\eta_i$ and $\etatilde_i := \eta_i / f_i(\omega)$, respectively.
\end{proposition}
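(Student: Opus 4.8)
The plan is to work directly from the exponential-family form and track how a scaling $\xtilde = \omega\x$ propagates through the log-density. Write
\[
\log p(\x;\etab) = \log h(\x) + \etab^\top T(\x) - A(\etab),
\]
and similarly for the scaled data with parameters $\etatildeb$. The key observation is that the base measure term $\log h(\xtilde)$ does not depend on $\etatildeb$ at all, so it vanishes under any derivative $\spd[j]{\etatilde_i}$; likewise, the only interaction between $\etatilde_i$ and the data is through the linear term $\etatilde_i T_i(\xtilde)$. So the first step is to isolate, for fixed $i$, the $\etatilde_i$-dependence of $\log p(\xtilde;\etatildeb)$ as $\etatilde_i T_i(\xtilde) - A(\etatildeb)$ plus terms killed by $\spd[j]{\etatilde_i}$.

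Next I would substitute the hypothesis $T_i(\xtilde) = f_i(\omega) T_i(\x) + g_i(\omega)$, which gives $\etatilde_i T_i(\xtilde) = \etatilde_i f_i(\omega) T_i(\x) + \etatilde_i g_i(\omega)$. Now impose the reparametrization $\etatilde_i := \eta_i / f_i(\omega)$, i.e. $\eta_i = f_i(\omega)\etatilde_i$; then the data-coupling term becomes exactly $\eta_i T_i(\x)$, matching the original model's coupling, plus an additive piece $\etatilde_i g_i(\omega)$ that is linear in $\etatilde_i$ and hence annihilated by every $\spd[j]{\etatilde_i}$ for $j\ge 2$ (and contributes only a constant shift for $j=1$, which — together with the $A$ term — must be handled carefully). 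Then I would apply the chain rule: since $\eta_i = f_i(\omega)\etatilde_i$ with $f_i(\omega)$ constant in $\etatilde_i$, we have $\spd{\etatilde_i} = f_i(\omega)\,\spd{\eta_i}$ acting on any function of $\eta_i$, so iterating $j$ times pulls out a factor $f_i(\omega)^j$, yielding the claimed identity for the $\etab^\top T(\x)$ contribution.

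The one place that needs genuine care — and what I expect to be the main obstacle — is the log-partition term $A(\etatildeb)$, which depends on the full vector $\etatildeb$ and not obviously in a way that scales cleanly coordinate-by-coordinate. The resolution is that $A$ is exactly the normalizer that makes $p(\xtilde;\etatildeb)$ integrate to one, so $\int h(\xtilde)\exp[\etatildeb^\top T(\xtilde)]\,\d\xtilde = \exp A(\etatildeb)$; changing variables back to $\x$ (which introduces a Jacobian factor $1/\omega$ that is constant in $\etatildeb$) and using the factorization of all the $T_i$ shows $A(\etatildeb) = A(\etab) + (\text{affine in each } \etatilde_i) + \text{const}$, so that $\spd[j]{\etatilde_i} A(\etatildeb) = f_i(\omega)^j \spd[j]{\eta_i} A(\etab)$ for $j\ge 2$, and for $j=1$ the affine part cancels precisely against the $\etatilde_i g_i(\omega)$ term from the sufficient statistic — this cancellation is the reason the statement is clean for all $j$ and not just $j\ge 2$. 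Combining the three contributions (base measure: zero; data-coupling: factor $f_i(\omega)^j$; log-partition: factor $f_i(\omega)^j$) gives $\spd[j]{\etatilde_i}\log p(\xtilde;\etatildeb) = f_i(\omega)^j\,\spd[j]{\eta_i}\log p(\x;\etab)$, as required. I would present the $A$-term bookkeeping as the heart of the argument and treat the rest as routine.
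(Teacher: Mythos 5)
Your proposal is correct and takes essentially the same route as the paper's proof: both hinge on the relation $A(\etatildeb) = A(\etab) + \etatildeb^{\top}\boldsymbol{g}(\omega) + \text{const}$, so that the $\etatilde_i g_i(\omega)$ contributions cancel and $\log p(\xtilde;\etatildeb)$ equals $\log p(\x;\etab)$ plus a term free of the parameters, after which the chain rule $\spd{\etatilde_i} = f_i(\omega)\,\spd{\eta_i}$ (iterated, as in the paper's induction on $j$) yields the factor $f_i(\omega)^j$. Your folding of the residual base-measure/Jacobian piece into a ``constant'' is at the same level of rigor as the paper, which keeps that piece explicit as $\log \E[p(\x;\etab)]{\tfrac{h(\xtilde)}{h(\x)}\xtilde'(\x)}$ but likewise asserts without further argument that it does not depend on $\etab$.
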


A more complex version of the proposition and its proof can be found Appendix~\ref{appendix:A}.
Although the proposition's requirements may look restrictive at first, as reported in Table \ref{table:dists}, many commonly-used distributions fulfil such properties.  
It also is worth-mentioning that in the case of the log-normal distribution we consider the scaling function $\xtilde = \x^\omega$, instead of $\xtilde = \omega\x$. 

 \begin{table*}[t]
 	\centering
     \caption{First two columns: Multiplicative and additive noise (see Prop.~\ref{prop:grad-exp}) for some common distributions (parameterized for simplicity with the canonical parameters, instead of the natural ones). When $f_i$ or $g_i$ is omitted, it is assumed to be $1$ or $0$, respectively. Last two columns: $L$-smoothness of the scaled likelihood (parameterized by $\etatilde_1$ and $\etatilde_2$) as a function of the original (canonical)  likelihood parameters. $Rat$ denotes a rational function, and $\psi^{(1)}$ the trigamma function.} \label{table:dists}
 	\begin{tabular}{lllll} \toprule
 		\textbf{Distribution (param.)}   & \multicolumn{1}{l}{$T_1(x)$}  &  \multicolumn{1}{l}{$T_2(x)$} &  \multicolumn{1}{|l}{$\Ltilde_1^\std$} &  \multicolumn{1}{l}{$\Ltilde_2^\std$} \\ \midrule
 		(Log-)Normal \hfill $(\mu, \sigma)$ & $f_1 = \omega$ & $f_2 = \omega^2$ & $1 + 2\abs{\frac{\mu}{\sigma}}$ & $\approx 4\abs{\frac{\mu}{\sigma}}^2 + 2$ \\ %
 		Gamma \hfill $(\alpha, \beta)$ & $g_1 = \log\omega$ & $f_2 = \omega$ & $\approx\abs{\alpha\psi^{(1)}(\alpha)}$ & $1+1/\sqrt{\alpha}$\\ 
 		Inverse Gaussian \hfill $(\mu, \lambda)$ & $f_2 = 1/\omega$ & $1 + 1/\mu^2$ & $Rat(\mu)$ \\ 
 		Inverse Gamma \hfill $(\alpha, \beta)$ & $g_1 = \log\omega$ & $f_2 = 1/\omega$ & $\approx\abs{\alpha\psi^{(1)}(\alpha)}$ & $Rat(\alpha)$ \\ 
 		Exponential \hfill $(\lambda)$ & $f_1 = \omega$ & & \multicolumn{1}{l}{1} & \\ 
 		Rayleigh \hfill $(\sigma)$ & $f_1 = \omega^2$ & & $\approx 5.428$ & \\ \bottomrule
 	\end{tabular}
 \end{table*}

Assume now that $\ele{\etab}$ is $L_i$-smooth with respect to its $i$-th natural parameter, $\eta_i$.
Using Proposition~\ref{prop:grad-exp}, we  obtain the Lipschitz constant of the scaled likelihood  $\eletilde[d]{\etatildeb_d}$ as a function of the original one $\ele{\etab}$, i.e.,  %
\begin{align*}\label{eq:lip-logp}
    | \partial_{\etatilde_i} \eletilde{\widetilde{\boldsymbol{a}}} - \partial_{\etatilde_i} \eletilde{\widetilde{\boldsymbol{b}}} | &= | f_i(\omega)|~| \partial_{\eta_i} \ele{\boldsymbol{a}} - \partial_{\eta_i} \ele{\boldsymbol{b}} | \leq  |f_i(\omega)|L_i ~\norm{\boldsymbol{a} - \boldsymbol{b}} \\
    &= \abs{f_i(\omega)}L_i ~||{f(\omega) \odot (\widetilde{\boldsymbol{a}} - \widetilde{\boldsymbol{b}})}|| \leq \abs{f_i(\omega)}~\norm{f(\omega)} L_i~||\widetilde{\boldsymbol{a}} - \widetilde{\boldsymbol{b}} || \numberthis,
  \end{align*}
where $\widetilde{\boldsymbol{a}}, \widetilde{\boldsymbol{b}}\in \R^I$ are two different (scaled) parameters and the last expression is a result of the Cauchy-Schwarz inequality.
Assuming the $1$-norm, this implies that the scaled log-likelihood $\eletilde{\etatilde}$ is \hbox{$\Ltilde_i$-smooth} with respect to $\etatilde_i$, with \begin{equation} \label{eq:ltilde}
	\Ltilde_i(\omega)=\abs{f_i(\omega)}~\sum_j\abs{f_j(\omega)}  L_i.
\end{equation}

\subsection{``Standardizing'' the  optimization landscape}

In order to quantify the $L$-smoothness of a function, we need to compute its Lipschitz constant. 
As we are considering here data scaling transformations, i.e.,  a preprocessing step, we focus
on the local $L$-smoothness around 
 the empirical estimation of the natural parameters, denoted by $\widehat{\etab}$. %
 As an example, assuming a Gaussian variable with empirical  mean and standard deviation denoted by $\widehat{\mu}$ and $\widehat{\sigma}$, then  $\widehat{\eta}_1 = \widehat{\mu}/\widehat{\sigma}^2$ and $\widehat{\eta}_2 = -1/2\widehat{\sigma}^2$.

Unfortunately, calculating the ($\epsilon$-local) Lipschitz constant may be  challenging, as it involves solving 
\begin{equation}
    L_i = \max_{\substack{\boldsymbol{a}\neq \boldsymbol{b} \\\boldsymbol{a}, \boldsymbol{b}   \in B(\widehat{\etab}, \epsilon)}} \frac{\norm{\spd{\eta_i} \ele{\boldsymbol{a}} - \spd{\eta_i} \ele{\boldsymbol{b}}}}{\norm{\boldsymbol{a} - \boldsymbol{b}}},
\end{equation}
where  $B(\widehat{\etab}, \epsilon)$ is the ball with radius $\epsilon$ and centered in the empirical estimation of the natural parameters $\widehat{\etab}$.
Instead, we here rely on an  estimator of $L_i$, which 
is derived by taking the limit $\epsilon\rightarrow 0$ and
making
use of the multivariate mean value theorem.%
\begin{theorem}[Mean Value Theorem]
	Let $\ell(\etab)$ be a twice-differentiable real-valued function with respect to $\eta_i \in \etab$ on $Q\subset\R^I$. Then, for any two values \hbox{$\boldsymbol{a}, \boldsymbol{b} \in Q$}, there exists \hbox{$\boldsymbol{c}\in Q$} such that
	$$\spd{\eta_i} \ell(\boldsymbol{a}) - \spd{\eta_i} \ell(\boldsymbol{b}) = \nabla_\etab \left[\spd{\eta_i} \ell(\boldsymbol{c})\right] \cdot (\boldsymbol{a} - \boldsymbol{b}).$$
\end{theorem}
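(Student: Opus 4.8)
The plan is to reduce the multivariate statement to the classical one-dimensional mean value theorem by restricting attention to the line segment joining $\boldsymbol{a}$ and $\boldsymbol{b}$. Write $g(\etab) := \spd{\eta_i}\ell(\etab)$; since $\ell$ is twice-differentiable (so that both $\spd{\eta_i}\ell$ and its gradient $\nabla_\etab[\spd{\eta_i}\ell]$ exist on $Q$), $g$ is a differentiable real-valued function on $Q$. Assuming $Q$ is convex --- which holds in the use case of interest here, where $Q = B(\widehat{\etab},\epsilon)$ is a Euclidean ball --- the segment $\{\boldsymbol{b} + t(\boldsymbol{a}-\boldsymbol{b}) : t\in[0,1]\}$ lies entirely in $Q$, so $g$ is defined and differentiable along it.

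Next I would define $h : [0,1]\to\R$ by $h(t) := g(\boldsymbol{b} + t(\boldsymbol{a}-\boldsymbol{b}))$. By the chain rule, $h$ is continuous on $[0,1]$ and differentiable on $(0,1)$ with $h'(t) = \nabla_\etab g(\boldsymbol{b}+t(\boldsymbol{a}-\boldsymbol{b}))\cdot(\boldsymbol{a}-\boldsymbol{b})$. Applying the ordinary (single-variable) mean value theorem to $h$ produces a $t^\star\in(0,1)$ with $h(1)-h(0) = h'(t^\star)$, i.e.
$$g(\boldsymbol{a}) - g(\boldsymbol{b}) = \nabla_\etab g(\boldsymbol{c})\cdot(\boldsymbol{a}-\boldsymbol{b}), \qquad \boldsymbol{c} := \boldsymbol{b} + t^\star(\boldsymbol{a}-\boldsymbol{b}) \in Q.$$
Substituting back $g = \spd{\eta_i}\ell$, and noting $\nabla_\etab g = \nabla_\etab[\spd{\eta_i}\ell]$, gives exactly the claimed identity with the desired $\boldsymbol{c}\in Q$.

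The only genuine subtlety --- and the one hypothesis the statement as phrased should make explicit --- is that the segment $[\boldsymbol{a},\boldsymbol{b}]$ must be contained in the domain on which $g$ is differentiable; convexity of $Q$ is the clean sufficient condition, and it is met in every application in this paper (balls centered at $\widehat{\etab}$). Beyond checking this, the argument requires no estimates: it is purely the reduction to one variable plus the classical MVT, so I do not anticipate a hard technical step.
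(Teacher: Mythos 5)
Your proof is correct: the paper states this as the classical (multivariate) mean value theorem and gives no proof of its own, so your reduction to the one-variable MVT along the segment from $\boldsymbol{b}$ to $\boldsymbol{a}$ is exactly the standard argument one would supply. Your remark that the segment must lie in $Q$ (e.g.\ $Q$ convex) is a legitimate tightening of an hypothesis the paper leaves implicit, and it is indeed satisfied in the paper's only use case, the ball $B(\widehat{\etab},\epsilon)$ around the empirical natural parameters.
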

By taking norms above and applying the Cauchy-Schwarz inequality we obtain the same inequality as in Eq.~\ref{eq:lipschitz}, $\norm{\spd{\eta_i} \ell(\boldsymbol{a}) - \spd{\eta_i} \ell(\boldsymbol{b})} \leq \norm{\nabla_\etab \spd{\eta_i} \ell(\boldsymbol{c})} \cdot \norm{\boldsymbol{a} - \boldsymbol{b}}$.
Setting $c = \widehat{\etab}$, we obtain our local estimator of the Lipschitz constant as: 
\begin{equation}\label{eq:estimator}
L_i \approx \norm[1]{\nabla_\etab~\spd{\eta_i} \ele{\widehat{\etab}}} = \sum_j~\abs{\spd{\eta_j\eta_i} \ele{\widehat{\etab}}}.
\end{equation}
Importantly,  if $\ell$ is $L_i$-smooth for each $\eta_i$ in the set of natural parameters $\etab$, then it is $\sum_i L_i$-smooth with respect to $\etab$.
Similarly, if $\ell_1$ is $L_1$-smooth and $\ell_2$ $L_2$-smooth, then $\ell_1 + \ell_2$ is $(L_1 + L_2)$-smooth.\footnote{Note that it could still exist an $L < L_1 + L_2$ such that $\ell_1 + \ell_2$ is $L$-smooth.} These properties are proved in Appendix~\ref{appendix:G}.

Moreover, for the distributions considered in Table~\ref{table:dists}, we can use our estimator to approximate the resulting $L$-smoothness after standardizing the data (details in Appendix~\ref{appendix:C}). 
These results shed some light on why standardizing works well in many settings, since \emph{it makes the $L$-smoothness comparable across dimensions} for several common likelihood functions. 
Specifically, i) the exponential and Rayleigh distributions have constant (local) $L$-smoothness; ii) a centered (log-)normal distribution is $3$-smooth; and iii) the Gamma distribution is (approximately) $1$-smooth as long as its shape parameter  $\alpha$ (which is scale-invariant, i.e., $\tilde{\alpha}=\alpha$) is sufficiently large. %
However, Table~\ref{table:dists} also showcases that for other likelihood the resulting Lipschitz constants may not be comparable.
This is the case for the inverse Gaussian (Gamma) distribution, whose Lipschitz constants after standardizing are rational functions of $\mu$ (of $\alpha$) that can be arbitrarily large or small.

\section{Lipschitz standardization} \label{sec:lip-std}

In the previous section we observed that the Lipschitz constant after scaling the data, $\Ltilde_i(\omega)$, can be seen as a function of the scaling factor $\omega$.    %
As a consequence, it should be possible to find an $\omega$ that eases balanced multivariate learning by making all the dimensions in the data share the same Lipschitz constant. 
In this section, we propose a novel data scaling algorithm with this same goal in mind,  \emph{Lipschitz standardization}. 
Intuitively, our algorithm puts the data into a region of the parameter space where the local $L$-smoothness is comparable across all dimensions.
Given a single $L$-smooth function $\ele{\gammab}$, it can be shown that there exists an optimal step size $\alpha^* = 1/L$ for first-order optimization~\citep{nesterov2018lectures}.
However, when we aim to jointly fit multiple functions, in our case log-likelihood functions $\{\ele[d]{\etab_d(\gammab)}\}_{d=1}^D$, each one being $L_d$-smooth, the optimal learning rate for each individual likelihood is different, {although the parameters (in our case, the variational parameters $\gammab$) that we optimize are shared}.
Importantly, while there exists an optimal learning rate for the overall likelihood function $\ele{\gammab} = \sum_d \ele[d]{\etab_d(\gammab)}$, it may still lead to an unbalanced learning process, and thus, to inaccurate fitting of the data.%

The proposed \emph{Lipschitz standardization} scales each $d$-th dimension using the weight $\omega_d^*$, obtained such that all dimensions share a similar  Lipschitz, i.e.,  
\begin{equation} \label{eq:system-equations}
    \omega_d^* = \argmin_{\omega_d} \left( \sum_{i=1}^{I_d} \Ltilde_{di}(\omega_d) - L^* \right)^2
\end{equation}
where $\Ltilde_{di}(\omega_d)$ are the scaled Lipschitz constants, as in Eq.~\ref{eq:ltilde}, and $L^*$ the target %
$L$-smoothness.
In our experiments we set $L^*$ to $1/(D\alpha)$, where $\alpha$ is the initial learning rate set by the practitioner.
The motivation behind this choice is approximating the resulting overall likelihood $\tilde{L}$-smoothness to the one optimal for a given learning rate, %
being $\tilde{L}=\sum_d \tilde{L}_d \approx \sum_d 1/(D\alpha) = 1/\alpha$. %

\textbf{Remark 1.}
In our experiments, we use Proposition~\ref{prop:grad-exp} and automatic differentiation to approximate the local $L$-smoothness, as well as closed-form solutions and root-finding methods to find the optimal scaling factors $\omega^*_d$ (details in Appendix~\ref{appendix:B}).
However, we recall that gradient descent may be also used to solve the optimization problem in Eq.~\ref{eq:system-equations}. %
As a result, Lipschitz-standardization is applicable to other log-likelihood functions than the ones discussed above, as well as for different 
problems beyond BBVI.

\textbf{Remark 2.} 
Our algorithm is a preprocessing step, and thus the Lipschitz standardized data $\xtilde$, as well as the scaled likelihood functions \hbox{$\eletilde[d]{\etatildeb_d}$}, are used to learn the model parameters (the variational parameters, in our case). However,
during test and deployment,
we ought come back to the original space of the data.
This can be done, in the case of distributions in the exponential family (see Section~\ref{sec:EF}) by using Prop.~\ref{prop:grad-exp}, which shows how to obtain the parameters of the original likelihood function as $\etab = \boldsymbol{f}(\omega) \odot \etatildeb$. %
Appendix~\ref{appendix:F} briefly sketches this idea, providing examples on how our approach applies to the distributions in Table~\ref{table:dists} and to discrete data, which we discuss next. %

\subsection{Discrete data} \label{sec:mixed-data}

Up to this point, our algorithm only applies to continuous data and  likelihood functions.
However,  real-world data often present mixed continuous and discrete data types, as well as likelihood models. 
Next, we extend the proposed Lipschitz-standardization method to discrete data (represented using the natural numbers), assuming discrete distributions such as  Bernoulli, Poisson and categorical distributions. We refer to this new approach as \emph{Gamma trick}.  %

\textbf{Gamma Trick.} This approach (detailed in Appendix~\ref{appendix:F}) can be summarised in four steps: i) transform the discrete data $\x$ to continuous $\xline$ via additive noise, i.e., $\xline = \x + \epsilon$, for which we assume a Gamma likelihood; ii)
apply Lipschitz standardization to $\xline$ to ease more balanced learning; iii) apply the learning process on the scaled data $\xtilde$ to learn the model parameters $\etatilde$; and iv) estimate the parameters of the original discrete distribution using the learned (un-)scaled continuous distribution. 

\textbf{Recovering the parameters of the discrete likelihood.} 
The Bernoulli and Poisson distributions are characterized by their expected value. Hence, to recover their distributional parameters for testing, it is enough to do mean matching between the original distribution and its (un-scaled) Gamma counterpart.
Note that the mean of the discrete variable $\x$ is given by $\mu = \overline{\mu} - \E{\epsilon}$, where $\overline{\mu}$ is the mean of $\overline{\x}$, i.e., ${\overline{\alpha} }/ {\overline{\beta}}$ under the (un-scaled) Gamma distribution with parameters ${\overline{\alpha} }$ and  ${\overline{\beta}}$.
Therefore, we estimate the mean of the Bernoulli distribution as \hbox{$p = \max(0, \min(1, \mu))$}, and the rate of the Poisson distribution as \hbox{$\lambda = \max(\delta,\mu)$}, where $0 < \delta \ll 1$ to ensure that $\lambda$ is positive.

As the categorical distribution has more than one parameter, a \emph{Bernoulli trick} is applied before applying the {Gamma trick}.
The  Bernoulli trick assumes a one-hot representation of the $K$-dimensional categorical distribution and treat each class as an independent Bernoulli distribution, which as shown above is suitable for the Gamma trick.
To recover the parameter of the categorical distribution $\boldsymbol{\pi} = (\range[\pi]{K})$ we individually recover the mean of each Bernoulli class, $\mu_k$, and make sure that they sum up to one, i.e.,
$\pi_k = {\mu_k}/{\sum_{i=1}^K \mu_i}$ for $k = \range{K}$.
Note that, when applying Lipschitz standardization to the categorical distribution, we account for the fact that it has been divided in $K$ Gamma distributions.
As we want all the observed dimensions to be $L^*$-smooth, we group up the new $K$ Gamma distributions and set their objective $L$-smoothness to $L^*/K$, so that they add up to the same $L$-smoothness, i.e., %
$\sum_k \L_k = L^*$.

\begin{figure*}[t]
	\centering
	\subcaptionbox*{}{\includegraphics[width=\linewidth, keepaspectratio]{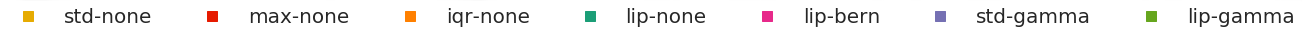}} \\
	\vspace{-\baselineskip}
	\subcaptionbox*{}{\includegraphics[width=.32\linewidth, keepaspectratio]{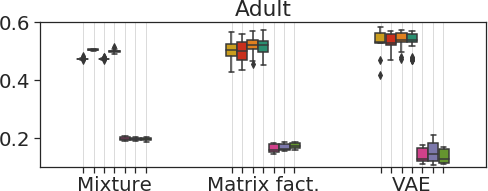}} \hfill %
	\subcaptionbox*{}{\includegraphics[width=.32\linewidth, keepaspectratio]{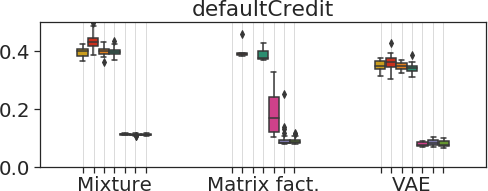}} \hfill %
	\subcaptionbox*{}{\includegraphics[width=.32\linewidth, keepaspectratio]{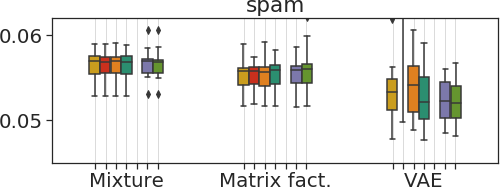}} \\ %
	\vspace{-\baselineskip}
	\subcaptionbox*{}{\includegraphics[width=.32\linewidth, keepaspectratio]{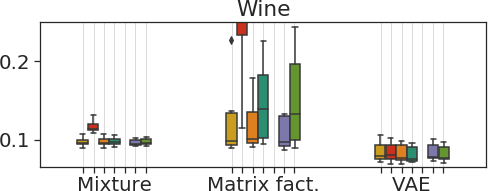}} \hfill %
	\subcaptionbox*{}{\includegraphics[width=.32\linewidth, keepaspectratio]{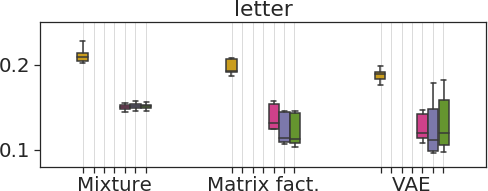}} \hfill %
	\subcaptionbox*{}{\includegraphics[width=.32\linewidth, keepaspectratio]{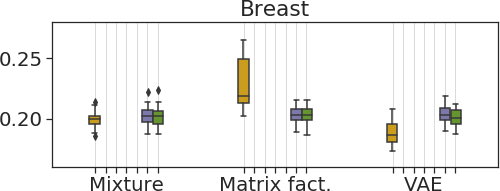}} %
	\vspace{-\baselineskip}
	\caption{Missing imputation error across different datasets and models (lower is better). Each method appears only when applicable and it is shown in the same order as in the legend.} \label{fig:boxplots}
\end{figure*} %

\textbf{Additive noise.} In our transformation from discrete data into continuous data, $\xline = \x + \epsilon$, we ensure that the continuous noise variable $\epsilon$: i) lies in a non-zero measure subset of the unit interval $\epsilon \in (0,1)$ so that the original value is identifiable;  ii) preserves the original data shape as much as possible; and  iii) ensures that the shape parameter $\alpha$ of the Gamma is far from zero, and  $L_1$ does not become arbitrarily large (see Appendix~\ref{appendix:C} for further details).
In our experiments we use noise $\epsilon \sim Beta(1.1, 30)$.

\section{Experiments}

\textbf{Experimental setup} \label{sec:exps}
We use six different datasets from the UCI repository~\citep{Dua:2019} and apply BBVI to %
fit three generative models: i) mixture model; ii) matrix factorization; and iii) (vanilla) VAE.
Additionally, we pick a likelihood for each dimension based on its observable properties (e.g., positive real data or categorical data) and, to provide a fair initialization across all methods and datasets, continuous data is standardized beforehand. 
Appendix~\ref{appendix:D} contains further details and tabular results. 

\textbf{Methods.} 
We consider different combinations of continuous and discrete preprocessing, taking them in our naming nomenclature as prefix and suffix, respectively.
Specifically, for continuous variables we use: i)~\emph{std}, standardization; ii) \emph{max}, normalization; iii)~\emph{iqr}, divides by the interquartile range; iv) \emph{lip}, Lipschitz standardization. %
And similarly we consider for discrete distributions: i)~\emph{none}, leaves the discrete data as it is; ii)~\emph{bern}, applies the Bernoulli trick to categorical data; iii)~\emph{gamma}, applies the Gamma trick to all discrete variables.
As an example, the proposed method applies the Gamma trick to the discrete variables, and then Lipschitz standardizes all the data, so that it is denoted as \lipgamma. %

\begin{wrapfigure}[26]{r}{.5\textwidth} %
    \centering
    \begin{minipage}[c]{.5\textwidth}
    	\subcaptionbox{Mixture model.}{\includegraphics[width=.49\columnwidth, keepaspectratio]{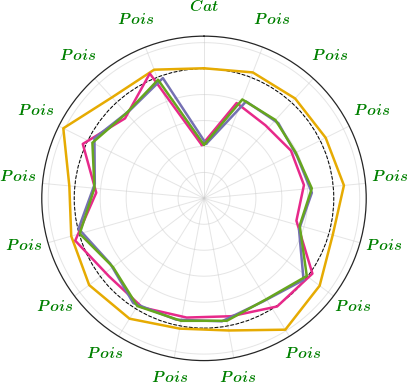}} \subcaptionbox*{}{\includegraphics[width=.3\columnwidth, keepaspectratio]{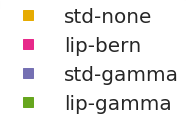}}
    	\subcaptionbox{Matrix factorization.}{\includegraphics[width=.49\columnwidth, keepaspectratio]{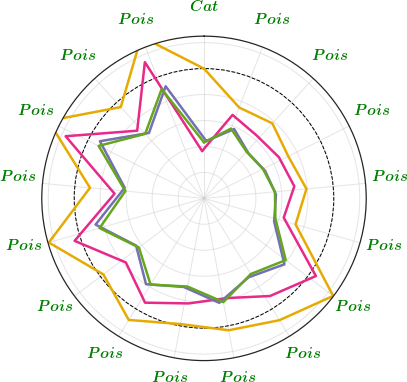}} 
    	\subcaptionbox{VAE.}{\includegraphics[width=.49\columnwidth, keepaspectratio]{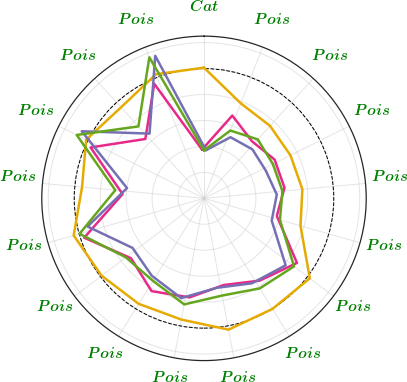}}
    	\caption{Per-dimension normalized error for different models on the \emph{Letter} dataset. Dotted line represents the baseline. Values closer to the origin are better.}
    	\label{fig:stars-discrete}
    \end{minipage}
\end{wrapfigure}

\begin{figure*}[t]%
    \centering
    \includegraphics[width=.85\textwidth, keepaspectratio]{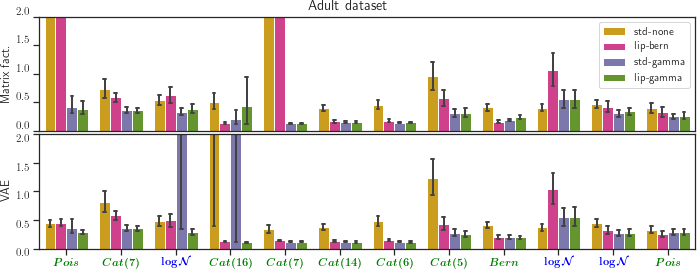}
    \caption{Per-dimension normalized error on the \emph{Adult} dataset. Top row: Matrix factorization. Bottom row: VAE. Note that all methods but \lipgamma overlook a subset of the variables.}
    \label{fig:stars-hetero}
    \vspace{-\baselineskip}
\end{figure*}

\textbf{Metric.}
Analogously to \citet{hivae}, we evaluate the performance of the methods in a data imputation tasks using average missing imputation error as evaluation metric. 
Specifically, normalized mean squared error is used for numerical variables and error rate for nominal ones. %
Besides, in Figures~\ref{fig:stars-discrete} and~\ref{fig:stars-hetero}, we
show the imputation error, normalized by the error obtained by mean imputation, for each dimension.  

\textbf{Results.}
Figure~\ref{fig:boxplots} summarizes the results averaged over three settings with \SI{10}{}, \SI{20}{}, and \SI{50}{\percent} of missing values---with 10 independent runs each---where outliers were removed for better visualization (more detailed results can be found in Appendix~\ref{appendix:E}).
We can distinguish two groups. 
The first group corresponds to the methods that leave discrete data untouched, where we observe that
the Lipschitz standardization (\lipnone) provides comparable results to the best of its counterparts (\maxnone, \stdnone, \iqrnone), being worth-mentioning the results of matrix factorization in \emph{defaultCredit}, where \stdnone and \iqrnone completely disappear from the plot after removing outliers.
Clearly, the second group of methods, which handle discrete variables using either the Bernoulli or Gamma trick, outperform the former group. 
This becomes particularly clear on highly heterogeneous datasets (e.g., \emph{defaultCredit} and \emph{Adult}), where we obtain---and occasionally beat---state-of-the-art results reported by~\citet{hivae}. %

We remark that, while results across models are consistent, the effect of data preprocessing directly depends on the model capacity and dataset complexity.
Specifically, the \emph{mixture model} is too restrictive, finding the same optimum regardless of the preprocessing; \emph{matrix factorization} has enough capacity to be greatly affected by the data (as shown in Figure~\ref{fig:boxplots}); and the VAE is as powerful as to overcome most of the differences in the preprocessing for simpler datasets, yet still being affected for the most complex datasets. %
This is nicely exemplified in Figure~\ref{fig:stars-discrete}, which shows per-dimension normalized error on the \emph{Letter} dataset, where we clearly observe the benefits of both the Bernoulli and Gamma tricks. 

Last but not least, the advantage of using Lipschitz-standardization, i.e. \lipgamma, compared with %
the other two competitive methods, \lipbern and \stdgamma, comes in the form of more
 consistent results for all datasets and independent runs, due to a more balanced learning. %
This can be easily seen by analyzing the per-dimension error of the most complex datasets---see  Figure~\ref{fig:stars-hetero}---where \lipgamma improves the overall imputation error across tasks without completely overlooking any variable. %
On the other hand, both \lipbern and \stdgamma overlook four different variables on the \emph{Adult} dataset using two different models. 
This behavior is not exclusive of \emph{Adult}, as Figures~\ref{fig:barplot-default}-\ref{fig:barplot-letter} and Tables~\ref{tab:10percent}-\ref{tab:50percent} in Appendix~\ref{appendix:D} show.
To tie everything up, we would like to point out that the illustrative example given in Section~\ref{sec:intro} (Figures~\ref{fig:intro-1}-\ref{fig:intro-2}) corresponds to a particular run from the bottom row.

\section{Conclusions}
In this work we have introduced the problem of balanced multivariate learning, which occurs when first-order optimization is used to perform approximate inference in multivariate probabilistic models, and which can be seen as a MTL problem.  %
Then, since existing solutions for MTL problems do not seem to directly apply in the probabilistic setting, we have instead focused on data preprocessing  as a simple and practical solution to mitigate unbalanced learning. 
In particular, we  have shed new insights on the behaviour of data standardization, finding that it makes the smoothness of common continuous  log-likelihoods comparable.
Finally, we have proposed Lipschitz standardization, a data preprocessing algorithm that eases balanced multivariate learning by making the local $L$-smoothness equal across all 
(discrete and continuous) dimensions of the data. %
Our experiments show that  Lipschitz standardization %
outperforms existing methods, and %
specially shines when the data is highly  heterogeneous.

Interesting research avenues include the implementation of Lipschitz standardization in  probabilistic programming pipelines, its use in settings different from BBVI (e.g., HMC), and extending this idea to an online algorithm embedded in the learning process, which takes the model into consideration and  enables the fine-tune of the local Lipschitz during learning.

	\bibliography{references}
	
	\appendix\onecolumn

\section{Data workflow and Gamma trick} \label{appendix:F}

It is important to bear in mind the transformation the data follows during the training procedure, as well as what we do with the data at each phase. 
To clarify this in our setting, we provide in Figure~\ref{fig:data-flow} two diagrams describing this procedure for continuous and discrete variables, following the notation of the main paper.
As a summary, data is transformed and scaled, and the scaled natural parameters are learned during training. Whenever evaluation is needed, these parameters are always returned to the space of the original data, that is, $\etatildeb$ is transformed to $\etab$ before evaluating on the space of $\x$.

\begin{figure}[h]
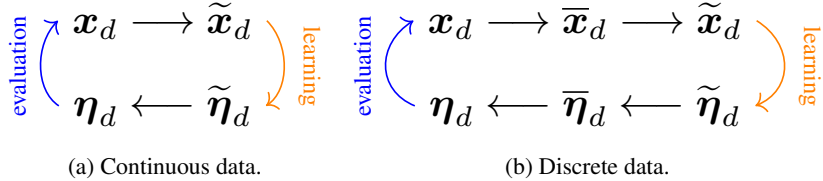

    \centering
    \subcaptionbox{Continuous data.}{\includestandalone[height=2cm]{figs/appendix/continuous}} %
    \subcaptionbox{Discrete data.\label{fig:data-flow-discrete}}{\includestandalone[height=2cm]{figs/appendix/discrete}}
    \caption{Schematic working flow used in this work. For training, data is transformed and their natural parameters are learned. To evaluate, the original parameters are recovered from the transformed ones.}
    \label{fig:data-flow}
\end{figure}

To avoid confusion, let us clarify here what are the transformations described in Figure~\ref{fig:data-flow-discrete} (the continuous case is included as a special case).
The step $\xb_d \mapsto \overline{\xb}_d$ refers to all the transformations regarding discrete data explained in Section~\ref{sec:mixed-data} of the main paper. Specifically, splitting a categorical variable into $K$ independent Bernoulli ones in the case of the Bernoulli trick, and the addition of noise in the case of the Gamma trick.
The transformation $\overline{\xb}_d \mapsto \xtildeb_d$ refers to the data scaling procedure: standardization, normalization, Lipschitz standardization, etc.
The orange arrow is the process performed by the model, which takes the input $\xtildeb_d$ and outputs the parameters~$\etatildeb_d$.
Then, in $\etatildeb_d\mapsto\overline{\etab}_d$, the parameters are scaled back to their original size, using the relationship between natural parameters described in Proposition~\ref{prop:grad-exp} of the main paper.
We do the transformation $\overline{\etab}_d\mapsto\etab_d$ as described in Section~\ref{sec:mixed-data} of the main paper, that is, removing noise, clipping, and gathering the $K$ independent parameters into a dependent one as necessary.
Finally, we can use those parameters $\etab_d$ to evaluate the data coming from the same source as the original data.

Something we have not discussed in the main paper regards the choice of the Gamma distribution as a proxy to learn the parameters of the Bernoulli and Poisson distributions.
As counter-intuitive as it might seem at first, it turns out that the Gamma distribution is a great distribution for doing mean matching with respect to these distributions.
To check this statement, we have run a simple Python code using \texttt{scipy.stats} that: i) generates random samples from a Bernoulli (Poisson) distribution; ii) adds additive noise from a distribution $Beta(1.1, 30)$; iii) fits the data to a Gamma distribution and performs mean matching as explained before; and iv) computes the mean absolute difference between the estimated and real parameters.
This procedure was performed for Bernoulli distributions with parameter $p=i/50$, and Poisson distributions with parameter $\lambda = i$ and $\lambda = i/50$ for $i=0,1,\dots,50$.
The average error obtained was \num{0.0081} and \num{0.0712} for the Bernoulli and Poisson distributions, respectively.

\subsection{Illustrative example of data workflow}

We provide a simple example that shows how data is transformed and used throughout the entire process.
Assume that we have two input dimensions, $D=2$, whose distributions are assumed to be normal $\X_1 \sim \Normal(\mu, \sigma)$ and categorical with 3 classes $\X_2 \sim Cat(\boldsymbol{\pi}  = \left(\pi_1, \pi_2, \pi_3\right))$, respectively. Let us further suppose that we want to use \lipgamma, that is, Lipschitz-standardization combined with the Gamma trick.
Then, we would not alter the first variable $\overline{\X}_1 = \X_1\sim \Normal(\mu, \sigma)$, but substitute $\X_2$ with $\overline{\X}_{2j} = \X_{2j} + \epsilon_j\sim\Gamma(\overline{\alpha}_j, \overline{\beta}_j)$, where $j = 1, 2, 3$ are the indexes of the new variables, $\X_{2j} \sim Bern(p_{j})$ refers to the $j$-th element of $\X_2$ when considered its one-hot representation, and $\epsilon_j \sim Beta(1.1, 30)$ is the (independent) additive noise variable.

Now, we can scale transform all variables, thus obtaining the new scaled variables \hbox{$\widetilde{\X}_1 = \omega_1 \overline{\X}_1 \sim \Normal(\widetilde{\mu}, \widetilde{\sigma})$} and $\widetilde{\X}_{2j} = \omega_{2j} \overline{\X}_{2j} \sim \Gamma(\widetilde{\alpha}, \widetilde{\beta})$ for $j=1, 2, 3$. 
After training---or whenever we need to evaluate the model in non-training data---we ought to return to the original probabilistic model $\X_1, \X_2$.
When recovering the $\overline{\X}$ variables, we need to use Proposition~\ref{prop:grad-exp} so that $\overline{\etab}_i = f_i(\omega) \odot \widetilde{\etab}_i$, where we have obtained $\etatildeb_i$ as the output of our model.

To finally recover the original variables, $\X_1, \X_2$, we do not need to do anything to $\X_1$ since $\X_1 = \overline{\X}_1$. For the second variable, we obtain $\X_{2j} \sim Bern(p_{j})$ as $$p_{j} = \max(0, \min(1, \E{\overline{\X}_{2j}} - \E{\epsilon_{j}})) = \max(0, \min(1, \overline{\alpha}_{j}/\overline{\beta}_{j} - 0.035)),$$
and finally recover $\X_2 \sim Cat(\boldsymbol{\pi})$ with $\boldsymbol{\pi} = \left(\frac{p_1}{p_1 + p_2 + p_3}, \frac{p_2}{p_1 + p_2 + p_3}, \frac{p_3}{p_1 + p_2 + p_3}\right)$.

\section[Basic properties of L-smoothness]{Basic properties of $L$-smoothness} \label{appendix:G}

\begin{proposition}
If a real-valued function $\ele{\etab}$ is $L_i$-smooth with respect to $\eta_i$, the $i$-th parameter of $\etab\in\R^I$, for all $i=\range{I}$, then $\ell$ is $\sum_i L_i$-smooth with respect to $\etab$ (assuming the $1$-norm).
\end{proposition}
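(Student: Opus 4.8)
The plan is to unfold the definition of per-coordinate $L$-smoothness and then exploit the additivity of the $1$-norm. In the notation of this section, the hypothesis that $\ele{\etab}$ is $L_i$-smooth with respect to $\eta_i$ means that the partial derivative $\spd{\eta_i}\ele{\cdot}$ is $L_i$-Lipschitz on $Q$; that is, $\abs{\spd{\eta_i}\ele{\boldsymbol{a}} - \spd{\eta_i}\ele{\boldsymbol{b}}} \leq L_i\,\norm[1]{\boldsymbol{a} - \boldsymbol{b}}$ for all $\boldsymbol{a},\boldsymbol{b}\in Q$ (this is exactly the bound produced by the mean value theorem argument right before Eq.~\ref{eq:estimator}). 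The target is the analogous inequality for the full gradient, $\norm[1]{\nabla_\etab\ele{\boldsymbol{a}} - \nabla_\etab\ele{\boldsymbol{b}}} \leq (\sum_i L_i)\,\norm[1]{\boldsymbol{a}-\boldsymbol{b}}$.

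First I would write the left-hand side in coordinates. Since the components of $\nabla_\etab\ele{\etab}$ are the partials $\spd{\eta_i}\ele{\etab}$, the definition of the $1$-norm gives
\begin{equation*}
	\norm[1]{\nabla_\etab\ele{\boldsymbol{a}} - \nabla_\etab\ele{\boldsymbol{b}}} = \sum_{i=1}^{I}\abs{\spd{\eta_i}\ele{\boldsymbol{a}} - \spd{\eta_i}\ele{\boldsymbol{b}}}.
\end{equation*}
Next I would bound each summand by its per-coordinate smoothness constant, $\abs{\spd{\eta_i}\ele{\boldsymbol{a}} - \spd{\eta_i}\ele{\boldsymbol{b}}} \leq L_i\,\norm[1]{\boldsymbol{a}-\boldsymbol{b}}$, sum the $I$ resulting inequalities, and factor out the common $\norm[1]{\boldsymbol{a}-\boldsymbol{b}}$ to obtain the constant $\sum_i L_i$. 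Twice-differentiability of $\ell$ is inherited coordinate-wise from the hypotheses, so all the requirements in the definition of $\sum_i L_i$-smoothness with respect to $\etab$ are met.

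I do not expect any real obstacle here: the whole argument is definition-chasing together with the identity $\norm[1]{v} = \sum_i\abs{v_i}$. The one point deserving care is consistency of the norm throughout—the coordinate decomposition and the final factoring both rely on using the $1$-norm on both sides of every inequality, which is precisely why the proposition is specialized to it; under a general $p$-norm the same scheme would additionally pick up a dimension-dependent constant from an equivalence-of-norms step, weakening the bound.
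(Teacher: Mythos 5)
Your proof is correct and follows essentially the same route as the paper's: expand the $1$-norm of the gradient difference into the sum of per-coordinate absolute differences, bound each by $L_i\,\norm[1]{\boldsymbol{a}-\boldsymbol{b}}$, and sum. The remark on keeping the $1$-norm consistent on both sides is a fine (and accurate) observation, but nothing beyond the paper's own two-line argument.
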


\begin{proof}
Consider two arbitrary $\boldsymbol{a}, \boldsymbol{b} \in R^I$. Then, by assumption, $\abs{\spd{\eta_i} \ele{\boldsymbol{a}} - \spd{\eta_i}\ele{\boldsymbol{b}}} \leq L_i \norm{\boldsymbol{a} - \boldsymbol{b}}$ for \hbox{$i=\range{I}$} and%

\begin{equation}
    \norm[1]{\nabla_{\etab} \ele{\boldsymbol{a}} - \nabla_{\etab}\ele{\boldsymbol{b}}} = \sum_i \abs{\spd{\eta_i} \ele{\boldsymbol{a}} - \spd{\eta_i}\ele{\boldsymbol{b}}} \leq \sum_i L_i \norm{\boldsymbol{a} - \boldsymbol{b}}.
\end{equation}
\end{proof}

\begin{proposition}
If two real-valued functions $\ele[1]{\etab}$ and $\ele[2]{\etab}$ are $L_1$-smooth and $L_2$-smooth with respect to $\etab$, respectively, then $\ell_1 + \ell_2$ is $L_1+L_2$-smooth with respect to $\etab$.
\end{proposition}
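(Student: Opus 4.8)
The plan is to prove this directly from the definition of $L$-smoothness given in Eq.~\ref{eq:lipschitz}, using the triangle inequality and linearity of the gradient. This is the standard "sum of Lipschitz-gradient functions has a Lipschitz gradient" fact, so the proof should be short and self-contained.

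First I would fix two arbitrary points $\boldsymbol{a}, \boldsymbol{b}$ in the common domain $Q$ on which both $\ell_1$ and $\ell_2$ are $L_1$- and $L_2$-smooth, respectively. By the hypotheses,
\begin{equation}
\norm{\nabla_\etab \ele[1]{\boldsymbol{a}} - \nabla_\etab \ele[1]{\boldsymbol{b}}} \leq L_1 \norm{\boldsymbol{a} - \boldsymbol{b}}, \qquad \norm{\nabla_\etab \ele[2]{\boldsymbol{a}} - \nabla_\etab \ele[2]{\boldsymbol{b}}} \leq L_2 \norm{\boldsymbol{a} - \boldsymbol{b}}.
\end{equation}
Then I would use linearity of the gradient, $\nabla_\etab(\ell_1 + \ell_2) = \nabla_\etab \ell_1 + \nabla_\etab \ell_2$, regroup the difference as $(\nabla_\etab \ele[1]{\boldsymbol{a}} - \nabla_\etab \ele[1]{\boldsymbol{b}}) + (\nabla_\etab \ele[2]{\boldsymbol{a}} - \nabla_\etab \ele[2]{\boldsymbol{b}})$, apply the triangle inequality for the norm, and finally substitute the two bounds above to get $(L_1 + L_2)\norm{\boldsymbol{a} - \boldsymbol{b}}$. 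Since $\boldsymbol{a}, \boldsymbol{b}$ were arbitrary, this establishes that $\ell_1 + \ell_2$ is $(L_1+L_2)$-smooth with respect to $\etab$.

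There is essentially no obstacle here; the only thing worth being careful about is that twice-differentiability of $\ell_1 + \ell_2$ (part of the $L$-smoothness definition in Eq.~\ref{eq:lipschitz}) follows immediately from twice-differentiability of each summand, and that the norm used is consistent across both hypotheses and the conclusion — but since the argument only uses the triangle inequality, it is valid for any norm. One could optionally note, mirroring the footnote in the main text, that $L_1 + L_2$ need not be the smallest such constant.

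\begin{proof}
Both $\ell_1$ and $\ell_2$ are twice-differentiable, hence so is $\ell_1 + \ell_2$. Let $\boldsymbol{a}, \boldsymbol{b} \in \R^I$ be arbitrary. By linearity of the gradient and the triangle inequality,
\begin{align*}
\norm{\nabla_{\etab} (\ell_1+\ell_2)(\boldsymbol{a}) - \nabla_{\etab} (\ell_1+\ell_2)(\boldsymbol{b})} &= \norm{\left(\nabla_{\etab} \ele[1]{\boldsymbol{a}} - \nabla_{\etab}\ele[1]{\boldsymbol{b}}\right) + \left(\nabla_{\etab} \ele[2]{\boldsymbol{a}} - \nabla_{\etab}\ele[2]{\boldsymbol{b}}\right)} \\
&\leq \norm{\nabla_{\etab} \ele[1]{\boldsymbol{a}} - \nabla_{\etab}\ele[1]{\boldsymbol{b}}} + \norm{\nabla_{\etab} \ele[2]{\boldsymbol{a}} - \nabla_{\etab}\ele[2]{\boldsymbol{b}}} \\
&\leq L_1 \norm{\boldsymbol{a} - \boldsymbol{b}} + L_2 \norm{\boldsymbol{a} - \boldsymbol{b}} = (L_1 + L_2) \norm{\boldsymbol{a} - \boldsymbol{b}},
\end{align*}
where the last inequality uses that $\ell_1$ is $L_1$-smooth and $\ell_2$ is $L_2$-smooth. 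Since $\boldsymbol{a}, \boldsymbol{b}$ were arbitrary, $\ell_1 + \ell_2$ is $(L_1+L_2)$-smooth with respect to $\etab$.
\end{proof}
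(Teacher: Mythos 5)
Your proof is correct and follows essentially the same route as the paper's: regroup the gradient difference by linearity, apply the triangle inequality, and invoke the two smoothness hypotheses. The extra remarks on twice-differentiability and norm-independence are fine but not needed; no gaps here.
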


\begin{proof}
Consider two arbitrary $\boldsymbol{a}, \boldsymbol{b} \in R^I$. Then, 
\begin{align*}
    \norm{\nabla_\etab (\ell_1 + \ell_2)(\boldsymbol{a}) - \nabla_\etab (\ell_1 + \ell_2)(\boldsymbol{b})} &= \norm{(\nabla_\etab \ele[1]{\boldsymbol{a}} - \nabla_\etab \ele[1]{\boldsymbol{b}}) + (\nabla_\etab \ele[2]{\boldsymbol{a}} - \nabla_\etab \ele[2]{\boldsymbol{b}})} \\
    &\leq\norm{\nabla_\etab \ele[1]{\boldsymbol{a}} - \nabla_\etab \ele[1]{\boldsymbol{b}}} + \norm{\nabla_\etab \ele[2]{\boldsymbol{a}} - \nabla_\etab \ele[2]{\boldsymbol{b}}} \\
    &\leq L_1 \norm{\boldsymbol{a} - \boldsymbol{b}} + L_2 \norm{\boldsymbol{a} - \boldsymbol{b}} = (L_1 + L_2) \norm{\boldsymbol{a} - \boldsymbol{b}}
\end{align*}
\end{proof}

\section{Exponential family} \label{appendix:A}

As stated in the main paper, the exponential family is characterized for having the form
\begin{equation}
p_d(\x_{nd} ; \etab_{nd}) = h(\x_{nd}) \exp{\left[\etab_{nd}^\top T(x_{nd}) - A(\etab_{nd})\right]},
\end{equation}
where $\etab_{nd}$ are the natural parameters, $T(\x)$ the sufficient statistics, $h(\x)$ is the base measure, and $A(\etab)$ the log-partition function. 

To ease the task of transforming between natural ($\etab$) and usual ($\boldsymbol{\theta}$) parameters, we provide in Table~\ref{tab:exponential} a cheat-sheet with the relationship between them for the distributions used along the paper, as well as the way that natural parameters are scaled with respect to the scaling factor $\omega$.

\begin{table}
    \centering
    \caption{Relationship between parameters $\boldsymbol{\theta}$ and natural parameters $\etab$, as well as the way the latter scale (see Proposition \ref{prop:grad-exp} of the main text) for different distributions of the exponential family.} \label{tab:exponential}
	\begin{tabular}{cccccccccc}
		\toprule
		\textbf{Likelihood} & $\boldsymbol{\theta}$ & $T(\x)$ &  $\boldsymbol{\theta}\mapsto\etab$ & $\etab\mapsto\boldsymbol{\theta}$ & $\x\mapsto\xtilde$ & $\boldsymbol{f}(\omega)$ & $\etab\mapsto\etatildeb$ \\ \midrule
		Normal & $\begin{bmatrix}\mu\\\\\sigma^2\end{bmatrix}$ & $\begin{bmatrix}\x\\\\\x^2\end{bmatrix}$ &  $\begin{bmatrix}\frac{\mu}{\sigma^2}\\\\\frac{-1}{2\sigma^2}\end{bmatrix}$ & $\begin{bmatrix}\frac{-\eta_1}{2\eta_2}\\\\\frac{-1}{\eta_2}\end{bmatrix}$ & $\omega\x$ & $\begin{bmatrix}\omega\\\\\omega^2\end{bmatrix}$ & $\begin{bmatrix}\frac{\eta_1}{\omega}\\\\\frac{\eta_2}{\omega^2}\end{bmatrix}$ \\ \midrule[0em]
		Log-normal & $\begin{bmatrix}\mu\\\\\sigma^2\end{bmatrix}$ & $\begin{bmatrix}\log\x\\\\(\log\x)^2\end{bmatrix}$ &  $\begin{bmatrix}\frac{\mu}{\sigma^2}\\\\\frac{-1}{2\sigma^2}\end{bmatrix}$ & $\begin{bmatrix}\frac{-\eta_1}{2\eta_2}\\\\\frac{-1}{\eta_2}\end{bmatrix}$  & $\x^\omega$ & $\begin{bmatrix}\omega\\\\\omega^2\end{bmatrix}$ & $\begin{bmatrix}\frac{\eta_1}{\omega}\\\\\frac{\eta_2}{\omega^2}\end{bmatrix}$ \\ \midrule[0em]
		Gamma & $\begin{bmatrix}\alpha\\\\\beta\end{bmatrix}$ & $\begin{bmatrix}\log\x\\\\\x\end{bmatrix}$  &  $\begin{bmatrix}\alpha-1\\\\-\beta\end{bmatrix}$ & $\begin{bmatrix}\eta_1 + 1\\\\-\eta_2\end{bmatrix}$ & $\omega\x$ & $\begin{bmatrix}1\\\\\omega\end{bmatrix}$ & $\begin{bmatrix}\eta_1\\\\\frac{\eta_2}{\omega}\end{bmatrix}$ \\ \midrule[0em]
		Inverse Gaussian & $\begin{bmatrix}\mu\\\\\lambda\end{bmatrix}$ & $\begin{bmatrix}\x\\\\\frac{1}{\x}\end{bmatrix}$ &  $\begin{bmatrix}-\frac{\lambda}{2\mu^2}\\\\-\frac{\lambda}{2}\end{bmatrix}$ & $\begin{bmatrix}\sqrt{\frac{\eta_2}{\eta_1}}\\\\-2\eta_2\end{bmatrix}$ & $\omega\x$ & $\begin{bmatrix}\omega\\\\\frac{1}{\omega}\end{bmatrix}$ & $\begin{bmatrix}\frac{\eta_1}{\omega}\\\\\eta_2\omega\end{bmatrix}$  \\ \midrule[0em]
		Inverse Gamma & $\begin{bmatrix}\alpha\\\\\beta\end{bmatrix}$ & $\begin{bmatrix}\log\x\\\\\frac{1}{\x}\end{bmatrix}$ &  $\begin{bmatrix}-\alpha-1\\\\-\beta\end{bmatrix}$ & $\begin{bmatrix}-\eta_1 - 1\\\\-\eta_2\end{bmatrix}$ & $\omega\x$ & $\begin{bmatrix}1\\\\\frac{1}{\omega}\end{bmatrix}$ & $\begin{bmatrix}\eta_1\\\\\eta_2\omega\end{bmatrix}$ \\ \midrule[0em]
		Exponential & $\begin{bmatrix}\lambda\end{bmatrix}$ & $\begin{bmatrix}\x\end{bmatrix}$  &  $\begin{bmatrix}-\lambda\end{bmatrix}$ & $\begin{bmatrix}-\eta_1\end{bmatrix}$ & $\omega\x$ & $\begin{bmatrix}\omega\end{bmatrix}$ & $\begin{bmatrix}\frac{\eta_1}{\omega}\end{bmatrix}$ \\ \midrule[0em]
		Rayleigh & $\begin{bmatrix}\sigma\end{bmatrix}$ & $\begin{bmatrix}\frac{\x^2}{2}\end{bmatrix}$ &  $\begin{bmatrix}\frac{-1}{\sigma^2}\end{bmatrix}$ & $\begin{bmatrix}\sqrt{\frac{1}{-\eta_1}}\end{bmatrix}$ & $\omega\x$ & $\begin{bmatrix}\omega^2\end{bmatrix}$ & $\begin{bmatrix}\frac{\eta_1}{\omega^2}\end{bmatrix}$  \\ \midrule[0em]
		Bernoulli & $\begin{bmatrix}p\end{bmatrix}$ & $\begin{bmatrix}x\end{bmatrix}$  &  $\begin{bmatrix}\log\frac{p}{1-p}\end{bmatrix}$ & $\begin{bmatrix}\frac{1}{1 + e^{-\eta_1}}\end{bmatrix}$  & \minusmark & \minusmark & \minusmark \\ \midrule[0em]
		Poisson & $\begin{bmatrix}\lambda\end{bmatrix}$ & $\begin{bmatrix}x\end{bmatrix}$  &  $\begin{bmatrix}\log\lambda\end{bmatrix}$ & $\begin{bmatrix}e^{\eta_1}\end{bmatrix}$  &
		\minusmark & \minusmark & \minusmark \\
		\bottomrule
	\end{tabular}
\end{table}

Regarding the relation between scaled and original data in the exponential family, we now prove a more general version of  Proposition~\ref{prop:grad-exp} from the main text.

\begin{proposition} \label{prop:relation}
	Let $p(\x; \etab)$ be a density function of the exponential family where $\x\in X \subset \R$ and $\etab \in Q \subset \R^I$.  Assume a bijective scaling function $\xtilde: X \times \R^+\rightarrow X $ such that for any $\omega\in\R^+$ it defines the function (and random variable) $\xtilde_\omega = \xtilde(x, \omega)$. If all sufficient statistics factorize as $T_i(\xtilde_\omega) = f_i(\omega)T_i(\x) + g_i(\omega)$, then by defining $\etatildeb$ such that $\etab = \boldsymbol{f}(\omega) \odot \etatildeb$, where $\boldsymbol{f} = \left(f_1, f_2, \dots, f_I\right)$ and $\odot$ is the element-wise multiplication, we have
	\begin{equation}
	\spd[j]{\etatilde_i} \log p(\xtilde_\omega, \etatildeb) = f_i(\omega)^j~\spd[j]{\eta_i} \log p(\x; \etab)\quad\text{for}\quad j=1,2,3,\dots,
	\end{equation} %
	where $\spd[j]{\etatilde_i}$ denotes the $j$th-partial derivative with respect to $\etatilde_i$.
\end{proposition}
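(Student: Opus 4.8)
The plan is to reduce the multivariate statement to a one-dimensional computation by differentiating the log-density directly. First I would write the log-density of the exponential family explicitly: $\log p(\x; \etab) = \log h(\x) + \etab^\top T(\x) - A(\etab)$, and similarly $\log p(\xtilde_\omega; \etatildeb) = \log h(\xtilde_\omega) + \etatildeb^\top T(\xtilde_\omega) - A(\etatildeb)$. The key observation is that the base measure and the part of the sufficient statistics $T(\x)$ do not depend on $\etatilde_i$, so when we take $\partial^j_{\etatilde_i}$ with $j\geq 1$, only the terms $\etatilde_i T_i(\xtilde_\omega)$ and $-A(\etatildeb)$ survive. Using the factorization $T_i(\xtilde_\omega) = f_i(\omega)T_i(\x) + g_i(\omega)$, the first of these contributes $f_i(\omega)T_i(\x)$ after one derivative and vanishes afterwards.

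Next I would handle the log-partition term. Since $\etab = \boldsymbol{f}(\omega)\odot\etatildeb$, by the chain rule we have $\partial_{\etatilde_i} A(\etatildeb) = f_i(\omega)\,\partial_{\eta_i}A(\etab)$ — here it is crucial that $\eta_i$ depends on $\etatilde_i$ only (the map is coordinatewise), so no cross terms appear. Iterating, $\partial^j_{\etatilde_i} A(\etatildeb) = f_i(\omega)^j\,\partial^j_{\eta_i}A(\etab)$ for all $j\geq 1$. The plan is then to do a short induction on $j$: for $j=1$, $\partial_{\etatilde_i}\log p(\xtilde_\omega;\etatildeb) = f_i(\omega)T_i(\x) - f_i(\omega)\partial_{\eta_i}A(\etab) = f_i(\omega)\big(T_i(\x) - \partial_{\eta_i}A(\etab)\big) = f_i(\omega)\,\partial_{\eta_i}\log p(\x;\etab)$, which also makes the additive noise $g_i(\omega)$ and the base measure drop out. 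For $j\geq 2$ the $T_i$ term is already gone, so only the $A$-derivative remains and the factor $f_i(\omega)^j$ follows from the chain-rule identity above.

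Since each step is an elementary derivative computation, there is no deep obstacle; the part that requires the most care is bookkeeping which terms depend on which variables — specifically verifying that $\log h$ and $T_j(\x)$ for $j\neq i$ are annihilated by $\partial_{\etatilde_i}$, and that the coordinatewise nature of $\etab = \boldsymbol{f}(\omega)\odot\etatildeb$ really does prevent cross terms in the chain rule for $A$. One should also note that $\etatildeb$ is well-defined, i.e.\ that $f_i(\omega)\neq 0$ for $\omega\in\R^+$, which holds for all the distributions in Table~\ref{tab:exponential}; this is implicitly needed for the definition $\etatilde_i = \eta_i/f_i(\omega)$ to make sense and for $\etatildeb$ to range over a valid parameter set. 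The simplified version (Proposition~\ref{prop:grad-exp}) then follows immediately by specializing to the scaling $\xtilde = \omega\x$.
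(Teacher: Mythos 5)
You propose to differentiate $\log p(\xtilde_\omega;\etatildeb)=\log h(\xtilde_\omega)+\etatildeb^{\top}T(\xtilde_\omega)-A(\etatildeb)$ term by term, which is a legitimate plan, but the step that carries all the weight is not justified and is wrong as stated. The normalizer $A(\etatildeb)$ of the scaled likelihood is the \emph{same} log-partition function $A$ evaluated at the new point $\etatildeb$, i.e.\ $A(\etatildeb)=\log\int h(u)\exp[T(u)^{\top}\etatildeb]\dif u$; it is \emph{not} the composition $A(\boldsymbol{f}(\omega)\odot\etatildeb)$, so there is no composite map to which your chain rule applies, and the identity $\spd{\etatilde_i}A(\etatildeb)=f_i(\omega)\,\spd{\eta_i}A(\etab)$ is precisely (a version of) what has to be proved. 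It is in fact false whenever some $g_i\neq 0$: for the Gamma likelihood with $T_1(\x)=\log\x$ and $\xtilde=\omega\x$ one has $f_1=1$, $g_1(\omega)=\log\omega$, and a direct computation gives $\spd{\etatilde_1}A(\etatildeb)=\psi(\eta_1+1)-\log(-\eta_2)+\log\omega=f_1(\omega)\,\spd{\eta_1}A(\etab)+g_1(\omega)$. Symmetrically, you drop the additive term from the sufficient statistic, since $\spd{\etatilde_i}\bigl[\etatilde_i T_i(\xtilde_\omega)\bigr]=T_i(\xtilde_\omega)=f_i(\omega)T_i(\x)+g_i(\omega)$, not $f_i(\omega)T_i(\x)$. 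These two unjustified omissions happen to cancel each other, but nothing in your argument shows that they do.

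The missing idea---and the actual content of the paper's proof---is the relation between the two normalizers, obtained by the change of variables $\dif\xtilde_\omega=\xtilde'_\omega(\x)\dif\x$:
\begin{equation*}
A(\etatildeb)=\sum_i g_i(\omega)\,\etatilde_i + A(\etab) + \log \E[p(\x;\etab)]{\frac{h(\xtilde_\omega)}{h(\x)}\,\xtilde'_\omega(\x)} .
\end{equation*}
The linear term produces exactly the $g_i(\omega)$ that cancels the one coming from $T_i(\xtilde_\omega)$, and the remaining work is to argue that the last term---equivalently, the correction $\log\phi(\x,\omega)$ in the paper's identity $\log p(\xtilde_\omega;\etatildeb)=\log p(\x;\etab)+\log\phi(\x,\omega)$---contributes nothing to the $\etatilde_i$-derivatives. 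This is where the base measure $h$ and the Jacobian $\xtilde'_\omega$ enter; for the families in Table~\ref{tab:exponential} it works out because $\frac{h(\xtilde_\omega)}{h(\x)}\xtilde'_\omega(\x)$ does not depend on $\x$, so the expectation is a parameter-free constant. Your proposal never engages with $h$ or the Jacobian at all, which is the only place the statement can fail. Once this relation between $A(\etatildeb)$ and $A(\etab)$ is established, your induction on $j$ (and your peripheral remark that $f_i(\omega)\neq 0$ is needed for $\etatildeb$ to be well defined) goes through essentially as in the paper.
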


\begin{proof}
First we are going to relate the normalization constants $A(\etatilde)$ and $A(\eta)$ of $\log p(\xtilde_\omega; \etatilde)$ and $\log p(\x; \eta)$, respectively:
\begin{align*}
    A(\etatilde) &= \log \int h(\xtilde_\omega) \exp \left[ T(\xtilde_\omega)\etatilde \right] \dif \xtilde_\omega = \sum g_i(\omega)\etatilde_i + \log \int h(\xtilde_\omega) \exp \left[ T(\x) \eta \right] \dif \xtilde_\omega   \\
    & = \sum g_i(\omega)\etatilde_i + \log \int \frac{h(\xtilde_\omega)}{h(\x)} h(\x) \exp \left[ T(\x) \eta + A(\eta) - A(\eta) \right] \xtilde'_\omega(\x) \dif \x \\
    & = \sum g_i(\omega)\etatilde_i + A(\eta) + \log \E[p(\x; \eta)]{\frac{h(\xtilde_\omega)}{h(\x)}\xtilde'_\omega(\x)}. \numberthis
\end{align*}

We can safely divide by $h(\x)$ since it is the Radon-Nikodym derivative $\frac{\dif H(\x)}{\dif \x}$ and we can assume that is non-zero almost everywhere in the domain of the likelihood.

Second, we are going to directly relate $p(\xtilde_\omega; \etatilde)$ and $p(\x; \eta)$ using a similar calculation:
\begin{equation}
    p(\xtilde_\omega; \etatilde) = h(\xtilde_\omega) \exp \left[ T(\xtilde_\omega) \etatilde - A(\etatilde) \right] = \frac{h(\xtilde_\omega)}{h(\x)} \frac{h(\x) \exp \left[ T(\x) \eta - A(\eta) \right]}{\E[p(\x; \eta)]{\frac{h(\xtilde_\omega)}{h(\x)} \xtilde_\omega'(x)}} = \frac{h(\xtilde_\omega)}{h(\x)}\frac{p(\x;\eta)}{\E[p(\x; \eta)]{\frac{h(\xtilde_\omega)}{h(\x)}\xtilde_\omega'(\x)}}
\end{equation}

By denoting $\phi(x, \omega)$ everything that is not $p(\x; \eta)$ in the previous equation we have that:
\begin{equation}
    \log p(\xtilde_\omega; \etatilde) = \log p(\x; \eta) + \log \phi (\x, \omega) 
\end{equation}

Now, for the case $j=1$ we just have to use the chain rule and the fact that $\phi(\x, \omega)$ does not depend on $\eta_i$:
\begin{equation}
    \spd{\etatilde_i} \log p(\xtilde_\omega; \etatilde) = \spd{\etatilde_i} \left[ \log p(\x; \eta) + \log \phi(\x, \omega) \right] = \spd{\etatilde_i} \eta_i \spd{\eta_i} \log p(\x; \eta) = f_i(\omega) \spd{\eta_i} \log p(\x; \eta).
\end{equation}

And we can just prove the case $j>1$ by induction:
\begin{equation}
    \spd[j]{\etatilde_i} \log p(\xtilde_\omega; \etatilde) = \spd{\etatilde_i} \spd[j-1]{\etatilde_i} \log p(\xtilde_\omega; \etatilde) = f_i^{j-1}(\omega) \spd{\etatilde_i} \eta_i \spd{\eta_i} \spd[j-1]{\eta_i} \log p(\x; \eta) = f_i^j(\omega) \spd[j]{\eta_i} \log p(\x; \eta).
\end{equation}

\end{proof}

\section{Finding optimal scaling factors for common distributions} \label{appendix:B}

In this section we show some results on how to find the optimal scaling factor $\omega_d$ solving the problem described in Equation~\ref{eq:system-equations} of the main paper. 
For completeness, let us recall the problem:
\begin{equation}
\omega_d^* = \argmin_{\omega_d} \left( \Ltilde_d - L^* \right)^2 = \argmin_{\omega_d} \left( \sum_{i=1}^{I_d} \Ltilde_{di} - L^* \right)^2\quad\text{for}\quad d = \range{D},  \label{eq:optimal-weight}
\end{equation}
where $\Ltilde_d$ is the Lipschitz constant corresponding to the $L$-smoothness of the scaled $d$-th dimension, and $L^* > 0$ is the smoothness goal that we attempt to achieve (as described in the main text).

For common distributions we are able to give some guarantees. Specifically, we can obtain closed-form solutions for the exponential and Gamma distributions, whereas for the (log-)normal distribution we prove the existence and uniqueness of the optimal $\omega_d$.

\paragraph{Remark} We use throughout the proofs the well-known result that $\spd{\eta_i} A(\etab) = \E{T_i(\x)}$ for any $i=\range{I}$ in the case of the exponential family. Therefore, $L_i = \sum_j \spd{\eta_j} \spd{\eta_i} \log p(\x; \etab)$ can be rewritten as \hbox{$L_i = \sum_j \spd{\eta_j} \E[\etab]{T_i(\x)} = \sum_j \spd{\eta_i} \E[\etab]{T_j(\x)}$}, where the last equality is a direct consequence of Young's theorem. %

\begin{proposition}[Exponential distribution]
	Let $X\sim Exp(\lambda)$ and $\Xb = \seq{\x}{n}{N}$. Suppose that, for some value $\widehat{\etab}$, it holds that $\log p(\X; \widehat{\etab})$ is $L_i$-smooth w.r.t. $\eta_i \in \etab$ for $i = 1$. Then the solution for problem \ref{eq:optimal-weight} always exists, is unique, and can be written as
	\begin{equation}
		\omega^* = \sqrt{\frac{L^*}{L_1}}.
	\end{equation}
\end{proposition}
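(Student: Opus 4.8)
The plan is to compute the scaled Lipschitz constant $\Ltilde_1(\omega)$ explicitly using Eq.~\ref{eq:ltilde}, and then solve the one-dimensional least-squares problem in closed form. For the exponential distribution we have a single natural parameter $\eta_1 = -\lambda$ with sufficient statistic $T_1(\x) = \x$, and under scaling $\xtilde = \omega\x$ one reads off from Table~\ref{table:dists} (or Table~\ref{tab:exponential}) that $f_1(\omega) = \omega$. Since there is only one parameter, Eq.~\ref{eq:ltilde} collapses to $\Ltilde_1(\omega) = \abs{f_1(\omega)}\cdot\abs{f_1(\omega)}\,L_1 = \omega^2 L_1$ (using $\omega > 0$, so the absolute values are redundant).

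Next I would substitute this into the objective of Eq.~\ref{eq:optimal-weight}. Because $I_d = 1$ here, the sum $\sum_i \Ltilde_{1i}(\omega)$ is just $\Ltilde_1(\omega) = \omega^2 L_1$, so the problem becomes $\omega^* = \argmin_{\omega > 0}\,(\omega^2 L_1 - L^*)^2$. Since $L_1 > 0$ and $L^* > 0$, the map $\omega \mapsto \omega^2 L_1$ is a continuous strictly increasing bijection from $(0,\infty)$ onto $(0,\infty)$, so the squared term attains its global minimum value $0$ at the unique point where $\omega^2 L_1 = L^*$, i.e. $\omega^* = \sqrt{L^*/L_1}$. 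This simultaneously gives existence, uniqueness, and the closed form claimed.

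The only subtlety — and the place I would be most careful — is making sure the factorization hypothesis of Proposition~\ref{prop:grad-exp} actually applies, so that $\Ltilde_1(\omega) = f_1(\omega)^2 L_1$ is legitimate: one must check that $T_1(\xtilde) = \omega\, T_1(\x)$, which holds trivially with $f_1(\omega) = \omega$ and $g_1(\omega) = 0$. Everything else is elementary; there is no real obstacle, as the exponential distribution is the simplest case precisely because it has a single natural parameter and the scaled Lipschitz constant is a pure monomial in $\omega$. I would also remark (or leave implicit, matching the paper's style) that $L_1$ itself is the local estimator $\abs{\spd[2]{\eta_1}\ell(\widehat{\etab})} = \abs{1/\widehat{\eta_1}^2} = 1/\widehat{\lambda}^2$ from Eq.~\ref{eq:estimator}, but this is not needed for the statement as phrased, which treats $L_1$ as given.
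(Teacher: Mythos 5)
Your proposal is correct and follows essentially the same route as the paper: compute $\Ltilde_1(\omega)=f_1(\omega)^2 L_1=\omega^2 L_1$ from the scaling relation and solve $\omega^2 L_1=L^*$ for the unique positive root $\omega^*=\sqrt{L^*/L_1}$. The only cosmetic difference is that the paper makes the existence claim rigorous by explicitly computing $\spd[2]{\eta_1}\log p(\x;\etab)=-1/\eta_1^2$ to show $L_1>0$, which you relegate to a closing remark ($L_1\approx 1/\widehat{\lambda}^2$) rather than treating as part of the argument.
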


\begin{proof}
	The minimum of problem \ref{eq:optimal-weight} happens when $\sum_{i=1}^{I} \Ltilde_i = L^*$. In this particular case, when $\Ltilde_1 = L^*$.
	As show in Equation~\ref{eq:lip-logp} from the main paper, we know that $\Ltilde_i(\omega) = \abs{f_i(\omega)} \sum_j \abs{f_j(\omega)} L_i$ for the $1$-norm.
	In our particular case, $\Ltilde_1(\omega) = f_1(\omega)^2 L_1 = \omega^2 L_1$.
	
	The resulting equation we need to solve is $L_1\omega^2 = L^*$, whose unique positive solution is $\omega = + \sqrt{\frac{L^*}{L_1}}$.
	
	To show that $\omega^*$ always exists we only have to show that $L_1 > 0$ in all cases, which can easily shown:
	\begin{align*}
		\spd[2]{\eta_1} \log p(\x; \eta_1) &= \spd[2]{\eta_1} \left(\log \lambda - \lambda \x\right) = \spd[2]{\eta_1} \left(\log (-\eta_1) + \eta_1 \x\right) \\
		&= \spd{\eta_1} \left(\frac{1}{\eta_1} + \x\right) = \frac{-1}{\eta_1^2}
	\end{align*}
	and $L_1 = \abs{\spd[2]{\eta_1}} = \eta_1^{-2} > 0$ since $\eta_1 > 0$ by definition.
	
\end{proof}

\begin{proposition}[Gamma distribution]
	Let $X\sim \Gamma(\alpha, \beta)$ and $\Xb = \seq{\x}{n}{N}$. Suppose that, for some value $\widehat{\etab}$, it holds that $\log p(\X; \widehat{\etab})$ is $L_i$-smooth w.r.t. $\eta_i \in \etab$ for $i = 1, 2$. Then the solution for problem \ref{eq:optimal-weight} exists if $L^* > L_1$, is unique, and can be written as
	\begin{equation}
	\omega^* = \frac{-L_1 - L_2 + \sqrt{(L_1 - L_2)^2 + 4L_2L^*}}{2L_2}.
	\end{equation}
\end{proposition}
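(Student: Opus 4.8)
The plan is to turn the optimization in Eq.~\ref{eq:optimal-weight} into a single quadratic equation in $\omega$ and then read off existence, uniqueness, and the closed form. Since $(\Ltilde_d(\omega) - L^*)^2 \geq 0$ with equality exactly when $\Ltilde_1(\omega) + \Ltilde_2(\omega) = L^*$, it is enough to show that this scalar equation has a unique admissible (positive) root and to compute it.

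First I would instantiate Eq.~\ref{eq:ltilde} for the Gamma distribution using Table~\ref{table:dists}: its sufficient statistics scale with $f_1 \equiv 1$ and $f_2(\omega) = \omega$. Because a scaling factor is positive we may drop the absolute values, so $\sum_j |f_j(\omega)| = 1 + \omega$, and hence $\Ltilde_1(\omega) = (1+\omega)L_1$ and $\Ltilde_2(\omega) = \omega(1+\omega)L_2$. Summing gives $\Ltilde_1(\omega) + \Ltilde_2(\omega) = g(\omega)$ with
\begin{equation*}
g(\omega) := L_2\,\omega^2 + (L_1 + L_2)\,\omega + L_1 .
\end{equation*}

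Next I would record that $L_1, L_2 > 0$. Writing $\ell = \log p$ in natural parameters, the data term $\etab^\top T(x) + \log h(x)$ is affine in $\etab$, so $\partial_{\eta_i\eta_j}\ell = -\partial_{\eta_i\eta_j}A$, and evaluating the estimator of Eq.~\ref{eq:estimator} at $\widehat\etab$ yields strictly positive values (through the trigamma function and reciprocals of $\widehat\beta$ and $\widehat\beta^2$). In particular $L_2 > 0$, so $g$ is a genuine quadratic and the division by $2L_2$ in the claimed formula is legitimate. Then on $(0,\infty)$ we have $g'(\omega) = 2L_2\omega + (L_1+L_2) > 0$, so $g$ strictly increases from $g(0^+) = L_1$ to $+\infty$; therefore $g(\omega) = L^*$ has a (necessarily unique) positive solution if and only if $L^* > L_1$, which is exactly the stated condition. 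Solving $L_2\omega^2 + (L_1+L_2)\omega + (L_1 - L^*) = 0$, the discriminant simplifies via $(L_1+L_2)^2 - 4L_2(L_1 - L^*) = (L_1 - L_2)^2 + 4L_2L^*$, and the positive root is the claimed $\omega^*$; the remaining root is negative since the product of the roots $(L_1 - L^*)/L_2$ is negative when $L^* > L_1$.

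I do not expect a real obstacle here; the only points requiring a little care are (i) justifying that the absolute values in Eq.~\ref{eq:ltilde} can be removed because scaling factors are positive, and (ii) confirming $L_1, L_2 > 0$, so that the global minimum value $0$ is actually attained, and attained uniquely at $\omega^*$, rather than merely approached as $\omega \to 0^+$.
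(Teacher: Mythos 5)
Your proposal is correct and follows essentially the same route as the paper: instantiate $f_1\equiv 1$, $f_2(\omega)=\omega$ to reduce Eq.~\ref{eq:optimal-weight} to the quadratic $L_2\omega^2+(L_1+L_2)\omega+L_1=L^*$, verify $L_2>0$ via the estimator, simplify the discriminant to $(L_1-L_2)^2+4L_2L^*$, and take the positive root. The only cosmetic difference is that you establish existence and uniqueness of the positive root by monotonicity of the quadratic on $(0,\infty)$ (equivalently, by the sign of the product of the roots), whereas the paper argues directly that the root formula yields a positive value exactly when $L^*>L_1$; both arguments are equivalent.
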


\begin{proof}
	As in the exponential case, we want to solve the equation $\Ltilde_1(\omega) + \Ltilde_2(\omega) = L^*.$
	\begin{align*}
	\Ltilde_1(\omega) + \Ltilde_2(\omega) &= (\abs{f_1(\omega)} + \abs{f_2(\omega)})(\abs{f_1(\omega)}L_1 + \abs{f_2(\omega)}L_2) = (1 + \omega)(L_1 + L_2\omega) \\
	&= L_2\omega^2 + (L_1 + L_2)\omega + L1 = L^* 
	\end{align*}
	
	Therefore we need to find the roots of the polynomial $L_2\omega^2 + (L_1 + L_2)\omega + L1 - L^* = 0$.
	To find the roots, let us denote the discriminant as $\Delta = (L_1 + L_2)^2 - 4L_2(L_1 - L^*)$.
	Note that we can simplify $\Delta$:
	\begin{align*}
		\Delta &= (L_1 + L_2)^2 - 4L_2(L_1 - L^*) = L_1^2 + L_2^2 + 2L_1L_2 - 4L_1L_2 + 4L_2L^* \\
		&= L_1^2 + L_2^2 - 2L_1L_2 + 4L_2L^* = (L_1 - L_2)^2 + 4L_2L^*.
	\end{align*}
	
	The roots $\omega$ are given by $$\omega = \frac{-L_1 - L_2 \pm \sqrt{\Delta}}{2L_2},$$ and there always exists a single positive root as long as $\sqrt{\Delta} > -L_1 - L_2$:
	\begin{align*}
	\sqrt{\Delta} > -L_1 - L_2 &\Rightarrow \Delta > (L_1 + L_2)^2 \Rightarrow (L_1 - L_2)^2 + 4L_2L^* > (L_1 + L_2)^2 \\
	&\Rightarrow 4L_2L^* > 4L_2L_1 \Rightarrow L^* > L_1.
	\end{align*}
	
	If $L^* > L_1$ we can again show that the solution always exists by computing $L_2$:
	\begin{align*}
		&\spd{\eta_2} A(\etab) = \E[\etab]{T_2(\X)} = \E[\etab]{\X} = \frac{\alpha}{\beta} = -\frac{\eta_1 + 1}{\eta_2} \\
		&\spd[2]{\eta_2} \log p(\x; \etab) = - \spd{\eta_2} \frac{\eta_1 + 1}{\eta_2} = \frac{\eta_1 + 1}{\beta^2} = \frac{\alpha}{\beta^2} > 0 \\
		&\spd{\eta_1} \spd{\eta_2} \log p(\x; \etab) = - \spd{\eta_1} \frac{\eta_1 + 1}{\eta_2} = \frac{1}{-\eta_2} = \frac{1}{\beta} > 0 \\
		&L_2 \approx \abs{\spd{\eta_2} \log p(\x; \etab)} + \abs{\spd{\eta_1} \spd{\eta_2} \log p(\x; \etab)}  > 0
	\end{align*}
	
\end{proof}

\begin{proposition}[Normal distribution] \label{prop:normal}
	Let $X\sim \Normal(\mu, \sigma^2)$ and $\Xb = \seq{\x}{n}{N}$. Suppose that, for some value $\widehat{\etab}$, it holds that $\log p(\X; \widehat{\etab})$ is $L_i$-smooth w.r.t. $\eta_i \in \etab$ for $i = 1, 2$. Then the solution for problem \ref{eq:optimal-weight} always exists, is unique, and can be expressed as the unique positive root of
	\begin{equation}
	Q(\omega) = L_2\omega^4 + (L_1 + L_2)\omega^3 + L_1\omega^2 - L^*.
	\end{equation}
\end{proposition}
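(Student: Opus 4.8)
The plan is to turn the squared‑error objective in Eq.~\ref{eq:optimal-weight} into the problem of locating the unique positive root of the quartic $Q$, and then to settle existence and uniqueness by a monotonicity argument, exactly in the spirit of the preceding exponential and Gamma propositions. First I would specialize the general scaled‑Lipschitz formula to the normal model. From Table~\ref{table:dists}, under the scaling $\xtilde = \omega\x$ the sufficient statistics of $\Normal(\mu,\sigma^2)$ have multiplicative factors $f_1(\omega)=\omega$ and $f_2(\omega)=\omega^2$, so for $\omega>0$ Eq.~\ref{eq:ltilde} yields $\Ltilde_1(\omega)=\omega(\omega+\omega^2)L_1=(\omega^2+\omega^3)L_1$ and $\Ltilde_2(\omega)=\omega^2(\omega+\omega^2)L_2=(\omega^3+\omega^4)L_2$. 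Adding these, $\Ltilde_1(\omega)+\Ltilde_2(\omega)=L_2\omega^4+(L_1+L_2)\omega^3+L_1\omega^2=:g(\omega)$, so the objective in Eq.~\ref{eq:optimal-weight} is exactly $(g(\omega)-L^*)^2=Q(\omega)^2\ge 0$; hence any $\omega$ with $Q(\omega)=0$ is a global minimizer, and it remains only to show $Q$ has a unique positive root.

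Next I would establish $L_1>0$ and $L_2>0$, reusing the identity $\spd{\eta_i}A(\etab)=\E[\etab]{T_i(X)}$ from the Remark. The diagonal entry of the estimator in Eq.~\ref{eq:estimator} is $\abs{\spd{\eta_i\eta_i}\ele{\widehat{\etab}}}=\abs{\spd{\eta_i}\E[\widehat{\etab}]{T_i(X)}}$, since $\ell$ is affine in the natural parameters up to the term $-A$. For the normal, $\E{T_1(X)}=\mu$ and $\E{T_2(X)}=\mu^2+\sigma^2$; writing $\mu=-\eta_1/(2\eta_2)$ and $\sigma^2=-1/(2\eta_2)$ one checks $\spd{\eta_1}\mu=\sigma^2>0$ and $\spd{\eta_2}(\mu^2+\sigma^2)\ge 2\sigma^4>0$. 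Since $L_1$ and $L_2$ are each a sum of absolute values that includes its (strictly positive) diagonal term, we get $L_1,L_2>0$; together with $L^*>0$, all coefficients of $g$ are positive.

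The last step is existence and uniqueness of the positive root. For existence, $Q(0)=-L^*<0$ while $Q(\omega)\to+\infty$ as $\omega\to+\infty$ (leading coefficient $L_2>0$), so the intermediate value theorem produces a root in $(0,\infty)$. For uniqueness, $g'(\omega)=\omega\bigl(4L_2\omega^2+3(L_1+L_2)\omega+2L_1\bigr)>0$ for every $\omega>0$, so $g$ is strictly increasing on $(0,\infty)$ and attains the level $L^*$ at exactly one point $\omega^*$. By the reduction in the first paragraph, this $\omega^*$ is the unique positive root of $Q$ and the unique solution of Eq.~\ref{eq:optimal-weight}.

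I do not expect a real obstacle here: once $g$ is identified the argument is a textbook monotonicity/IVT computation, and $L_1,L_2>0$ follows just as in the exponential and Gamma cases. The only points needing a little care are the sign bookkeeping when dropping the absolute values in $\Ltilde_i(\omega)$ (harmless, since a scaling factor is positive, $\omega>0$, which also pins down which of the possibly several real roots of the quartic is meant) and simply discarding any spurious nonpositive real roots of $Q$, which are irrelevant to the minimization in Eq.~\ref{eq:optimal-weight}.
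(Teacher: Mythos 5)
Your proof is correct, and it matches the paper up to the reduction step: both derive $\Ltilde_1(\omega)+\Ltilde_2(\omega)=(\omega+\omega^2)(L_1\omega+L_2\omega^2)=L_2\omega^4+(L_1+L_2)\omega^3+L_1\omega^2$ and reduce the minimization to finding a positive root of $Q$. Where you genuinely diverge is in settling existence and uniqueness of that root. The paper only verifies $L_2>0$ (noting the off-diagonal term $2\abs{\mu}\sigma^2$ vanishes only when $\eta_1=0$, in which case the diagonal term is positive), then splits $Q(\omega)=P(\omega)+(L_1+L_2)\omega^3-L^*$ with $P(\omega)=L_2\omega^4+L_1\omega^2$, solves $P(\omega)=G_i$ in closed form, and constructs an alternating sequence $\omega_2<\omega_4<\dots$ and $\omega_1>\omega_3>\dots$ whose $Q$-values form nested compact intervals shrinking to $\{0\}$ by Cantor's intersection theorem, obtaining $\omega^*$ as the common limit. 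You instead note that $g(\omega)=L_2\omega^4+(L_1+L_2)\omega^3+L_1\omega^2$ satisfies $g'(\omega)=\omega\bigl(4L_2\omega^2+3(L_1+L_2)\omega+2L_1\bigr)>0$ on $(0,\infty)$, with $g(0)=0<L^*$ and $g(\omega)\to\infty$, so strict monotonicity plus the intermediate value theorem give exactly one positive solution of $g(\omega)=L^*$. Your route is more elementary and makes uniqueness explicit, whereas in the paper it is only implicit in the sequence construction; your additional check that $L_1>0$ (diagonal terms $\sigma^2$ and $2\sigma^2(\sigma^2+2\mu^2)$) is harmless but not needed, since $L_2>0$ and $L_1\ge0$ already suffice for your monotonicity argument. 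What the paper's longer constructive argument buys is essentially an explicit iterative scheme (via the closed-form solution of $P(\omega)=G$) that could be used to locate $\omega^*$ numerically, which your purely existential IVT argument does not provide.
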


\begin{proof}
	First, note that $L_2$ is always positive. To show that we calculate it approximation once again:
	\begin{align*}
		&\spd{\eta_2} A(\etab) = \E[\etab]{T_2(\X)} = \E[\etab]{\X^2} = \mu^2 + \sigma^2 = \frac{\eta_1^2}{4\eta_2^2} + \frac{-1}{2\eta_2} = \frac{\eta_1^2 - 2\eta_2}{4\eta_2^2}\\
		&\spd[2]{\eta_2} \log p(\x; \etab) = \spd{\eta_2} \frac{\eta_1^2 - 2\eta_2}{4\eta_2^2} = \frac{1}{4} \frac{-2\eta_2^2 - 2\eta_2(\eta_1^2 - 2\eta_2)}{\eta_2^4} = \frac{\eta_2-\eta_1^2}{2\eta_2^3} \\
		&\spd{\eta_1} \spd{\eta_2} \log p(\x; \etab) = \spd{\eta_1} \frac{\eta_1^2 - 2\eta_2}{4\eta_2^2} = \frac{\eta_1}{2\eta_2^2} = 2\mu\sigma^2 \\
		&L_2 \approx \abs{\spd[2]{\eta_2} \log p(\x; \etab)} + \abs{\spd{\eta_1} \spd{\eta_2} \log p(\x; \etab)}
	\end{align*}
	We have that $L_2 > 0$ since the second term is only zero when $\mu = 0$ and, if that is the case, $\eta_1 = 0$ and the first term is positive.
	
	As before, we want to solve $\Ltilde_1(\omega) + \Ltilde_2(\omega) = L^*$, which in this case has the form $$(\omega + \omega^2)(L_1\omega + L_2\omega^2) = L_2\omega^4 + (L_1 + L_2)\omega^3 + L_1\omega^2 = L^*.$$
		
	This is equivalent to finding the positive roots of $Q(\omega) = L_2\omega^4 + (L_1 + L_2)\omega^3 + L_1\omega^2 - L^*$.
	Then let us call $P(\omega) = L_2\omega^4 + L_1\omega^2$ so that $Q(\omega) = P(\omega) + (L_1 + L_2)\omega^3 - L^*$.
	
	Note that there exists a unique positive solution of the equation $P(\omega) = G_i$ with $G_i > 0$. In fact, the only positive root of $L_2\omega^4 + L_1\omega^2 - G_i$ is
	\begin{equation} \label{eq:sol-p}
		\omega = + \sqrt{\frac{-L_1 + \sqrt{L_1^2 + 4L_2G_i}}{2L_2}} > 0
	\end{equation}
	
	Define $G_0 = L^*$. As just pointed out, there exists a unique $\omega_1 > 0$ such that $P(\omega_1) = G_0$. Then $$Q(\omega_1) = P(\omega_1) + (L_1 + L_2)\omega_1^3 - G_0 = (L_1 + L_2)\omega_1^3 > 0.$$
	
	Define now $G_1 = G_0 - (L_1 + L_2)\omega_1^3$. Again, there exists a unique $\omega_2 > 0$ such that \hbox{$P(\omega_2) = G_1$ and} $$Q(\omega_2) = P(\omega_2) + (L_1 + L_2)\omega_2^3 - G_0 = G_1 - G_0 + (L_1 + L_2)\omega_2^3 = (L_1 + L_2)(\omega_2^3 - \omega_1^3) < 0$$
	since $G_1 < G_0$, the discriminant of Equation~\ref{eq:sol-p} is smaller in the case of $G_1$ and thus $\omega_2 < \omega_1$.
	
	Define $G_2 = G_1 - (L_1 + L_2)(\omega_2^3 - \omega_1^3)$ and note that $G_1 < G_2 < G_0$ since
	\begin{align*}
		G_2 &= G_1 - (L_1 + L_2)(\omega_2^3 - \omega_1^3) = G_0 - (L_1 + L_2)\omega_1^3 - (L_1 + L_2)(\omega_2^3 - \omega_1^3) \\
		&= G_0 - (L_1 + L_2)\omega_2^3.
	\end{align*}
	
	We can now find $\omega_2 < \omega_3 < \omega_1$ such that $P(\omega_3) = G_2$, $Q(\omega_3) = (L_1 + L_2)(\omega_3^3 - \omega_2^3)$.
	Note that $\omega_1^3 > \omega_3^3 \Rightarrow \omega_1^3 + \omega_2^3 > \omega_3^3 \Rightarrow \omega_1^3 > \omega_3^3 - \omega_2^3$, meaning that $Q(\omega_3) < Q(\omega_1)$.
	
	Thus far, we have built a sequence such that $Q(\omega_2) < 0 < Q(\omega_3) < Q(\omega_1)$. 
	If we follow the process and define $G_3 = G_2 - (L_1 + L_2)(\omega_3^3 - \omega_2^3)$ we will find an $\omega_2 < \omega_4 < \omega_3$ such that $Q(\omega_2) < Q(\omega_4) < 0 < Q(\omega_3) < Q(\omega_1)$.
	
	Finally, let us define the sequence of intervals $I_i = [Q(\omega_{i+1}), Q(\omega_i)]$ for $i = \range{\infty}$ constructed using the described procedure. 
	This sequence is a strictly decreasing nested sequence of non-empty compact subsets of $\mathbb{R}$.
	Therefore, Cantor's intersection theorem states that the intersection of these intervals is non-empty, $\cap_i I_i \neq \emptyset$, and since the only element which is in all the intervals is $0$, $\cap_i I_i = \left\{0\right\}$.
	
	The sequence $\left\{Q(\omega_{2i})\right\}_{i=1}^\infty$ ($\left\{Q(\omega_{2i+1})\right\}_{i=1}^\infty$) converges to $0$ since it is a strictly decreasing (increasing) sequence lower-bounded (upper-bounded) by $0$.
	The sequences of their anti-images, $\left\{\omega_{2i}\right\}_{i=1}^\infty$ and $\left\{\omega_{2i+1}\right\}_{i=1}^\infty$, converge then to the same value, $\omega^*$, the root of $Q$ and the solution of problem \ref{eq:optimal-weight}.

\end{proof}

\section[L-smoothness estimation]{$L$-smoothness estimation} \label{appendix:C}

\subsection[L-smoothness after standardization]{$L$-smoothness after standardization}

Similar to what we have done in Appendix~\ref{appendix:B}, here we are going to compute the estimator of the local $L$-smoothness for some usual distributions using $\L = \sum_i \L_i = \sum_i \sum_j \abs{\spd{\eta_j}\spd{\eta_i} \log p(\x; \etab)}$, and then see how this smoothness changes as we scale by $\omega = 1/\std$. 
We will use here the standard deviation expression of each particular likelihood, therefore these results hold as long as the selected likelihood properly fits the data.

\paragraph{(Log-)Normal distribution}
First, we compute the partial derivatives of the log-likelihood:
\begin{align*}
    &\spd{\eta_1} A(\etab) = \E[\etab]{T_1(\X)} = \E[\etab]{\X} = \mu = \frac{-\eta_1}{2\eta_2} \\
	&\spd{\eta_2} A(\etab) = \E[\etab]{T_2(\X)} = \E[\etab]{\X^2} = \mu^2 + \sigma^2 = \frac{\eta_1^2}{4\eta_2^2} + \frac{-1}{2\eta_2} = \frac{\eta_1^2 - 2\eta_2}{4\eta_2^2}\\
	&\spd[2]{\eta_1} \log p(\x; \etab) = \spd{\eta_1} \frac{-\eta_1}{2\eta_2} = \frac{-1}{2\eta_2} = \sigma^2 \\
	&\spd[2]{\eta_2} \log p(\x; \etab) = \spd{\eta_2} \frac{\eta_1^2 - 2\eta_2}{4\eta_2^2} = \frac{1}{4} \frac{-2\eta_2^2 - 2\eta_2(\eta_1^2 - 2\eta_2)}{\eta_2^4} = \frac{\eta_2-\eta_1^2}{2\eta_2^3} = 2\sigma^2(\sigma^2 + 2\mu^2)\\
	&\spd{\eta_2} \spd{\eta_1} \log p(\x; \etab) = \spd{\eta_1} \spd{\eta_2} \log p(\x; \etab) =  \spd{\eta_1} \frac{\eta_1^2 - 2\eta_2}{4\eta_2^2} = \frac{\eta_1}{2\eta_2^2} = 2\mu\sigma^2 \\
\end{align*}
Therefore, we have that $L_1 \approx \sigma^2 + 2\abs{\mu}\sigma^2 $ and $L_2 \approx 2\sigma^2 ( \abs{\mu} + \sigma^2 + 2\mu^2)$.
After standardizing the data, we have that $\widetilde{\mu} = \mu / \sigma$ and $\widetilde{\sigma}^2 = 1$, resulting in $\Ltilde_1^\std = 1 + 2\frac{\abs{\mu}}{\sigma}$ and $\Ltilde_2^\std = 4\abs{\frac{\mu}{\sigma}}^2 + 2 \frac{\abs{\mu}}{\sigma} + 2$.

\paragraph{Gamma distribution} In this case we have:
\begin{align*}
    \spd{\eta_1} A(\etab) &= \E{T_1(\x)} = \alpha - \log\beta + \log\Gamma(\alpha) +(1-\alpha)\psi(\alpha) \\
    &= \eta_1 + 1 - \log(-\eta_2) + \log\Gamma(\eta_1+1) - \eta_1\psi(\eta_1+1)
\end{align*} %
\begin{align*}
    \spd[2]{\eta_1} \log p(\x; \etab) &= \spd{\eta_1} \left[\eta_1 + 1 - \log(-\eta_2) + \log\Gamma(\eta_1+1) - \eta_1\psi(\eta_1+1) \right] \\
    &=1 + \psi(\eta_1 + 1) - \psi(\eta_1 + 1) - \eta_1\psi^{(1)}(\eta_1 + 1) \\&= 1 - \eta_1\psi^{(1)}(\eta_1 + 1) = 1 + (1 - \alpha) \psi^{(1)}(\alpha) \numberthis \label{eq:gamma-L1}
\end{align*}
\begin{align*}
    & \spd{\eta_2} A(\etab) = \E{T_2(\x)} = \E{\x} = \frac{\alpha}{\beta} = \frac{\eta_1+1}{-\eta_2} \\
	&\spd[2]{\eta_2} \log p(\x; \etab) = \spd{\eta_2} \frac{\eta_1+1}{-\eta_2} = \frac{\eta_1 + 1}{\eta_2^2} = \alpha/\beta^2 = Var\left[\x\right]\\
	&\spd{\eta_2} \spd{\eta_1} \log p(\x; \etab) = \spd{\eta_1} \spd{\eta_2} \log p(\x; \etab) = \frac{1}{-\eta_2} = 1/\beta
\end{align*}

So that $L_1 \approx \abs{1 + (1-\alpha)\psi^{(1)}(\alpha)} + 1/\beta$ and $L_2 \approx Var\left[\x\right] + 1/\beta$.
After standardizing $\widetilde{\alpha} = \alpha$, $\widetilde{\beta} = \sqrt{\alpha}$ and $Var\left[\x\right] = 1$, therefore $\Ltilde_1^\std$ is a function of $\psi^{(1)}(\alpha)$ and $\Ltilde_2^\std = 1 + 1/\sqrt{\alpha}$.

\paragraph{Exponential distribution} If $\X \sim Exp(\lambda)$ then $\X \sim \Gamma(1, 1/\lambda)$, so we can use the previous results so that $L_1 \approx Var\left[\x\right]$ and $\Ltilde_1^\std = 1$.

\paragraph{Rayleigh distribution} This distribution has parameter $\sigma > 0$, sufficient statistic $T_1(\x) = \x^2/2$, and natural parameter $\eta_1 = -1/\sigma^2$.

We start by computing $\spd{\eta_1} A(\etab) = \E{T_1(\x)} = \frac{1}{2} \E{\x^2}$. Using that, for this distribution, \hbox{$\E{\x^j} = \sigma^j 2^{j/2} \Gamma(1 + \frac{j}{2})$}:
\begin{align*}
    & \spd{\eta_1} A(\etab) = \frac{1}{2}\E{\x^2} = \frac{1}{2}\sigma^2 2 \Gamma(2) = \sigma^2 = \frac{-1}{\eta_1} \\
    & \spd[2]{\eta_1} \log p(\x; \etab) = \spd{\eta_1} \frac{-1}{\eta_1} = \frac{1}{\eta_1^2} = \sigma^4
\end{align*}

Therefore, $\L_1 \approx \sigma^4$. After standardization, $Var\left[\x\right] = \frac{4-\pi}{2}\sigma^{2} = 1 \Rightarrow \widetilde{\sigma}^2 = \frac{2}{4-\pi}$ and \hbox{$\Ltilde_1^\std = \left(\frac{2}{4-\pi}\right)^2 \approx 5.428$}.

\paragraph{Inverse Gaussian distribution} This distribution has parameters $\mu, \lambda > 0$, sufficient statistics \hbox{$T_1(\x) = \x$}, $T_2(\x) = 1/\x$, and natural parameters $\eta_1 = \frac{-\lambda}{2\mu^2}, \eta_2 = \frac{-\lambda}{2}$.

\begin{align*}
    & \spd{\eta_1} A(\etab) = \E{\x} = \mu = \sqrt{{\eta_2}/{\eta_1}} \\
    & \spd{\eta_2} A(\etab) = \E{\frac{1}{\x}} = \frac{1}{\mu} + \frac{1}{\lambda} = \sqrt{\frac{\eta_1}{\eta_2}} - \frac{1}{2\eta_2} \\
    & \spd[2]{\eta_1} \log p(\x; \etab) = \spd{\eta_1} \sqrt{\frac{\eta_2}{\eta_1}} = \sqrt{\eta_2} \spd{\eta_1} \frac{1}{\sqrt{\eta_1}} = \frac{-1}{2} \sqrt{\frac{\eta_2}{\eta_1}} \frac{1}{\eta_1} = \sqrt{\frac{\eta_2}{\eta_1}} \frac{\eta_2}{\eta_1} \frac{1}{-2\eta_2} = \mu^3 / \lambda \\
    & \spd{\eta_2}\spd{\eta_1} \log p(\x; \etab) = \spd{\eta_2} \sqrt{\frac{\eta_2}{\eta_1}} = \frac{1}{2} \frac{1}{\sqrt{\eta_1\eta_2}} = \sqrt{\frac{\eta_2}{\eta_1}} \frac{-1}{-2\eta_2} = -\mu / \lambda \\
    & \spd[2]{\eta_2} \log p(\x; \etab) = \spd{\eta_2} \left(\sqrt{\frac{\eta_1}{\eta_2}} - \frac{1}{2\eta_2} \right) = \frac{-1}{2} \sqrt{\frac{\eta_1}{\eta_2}} \frac{1}{\eta_2} + \frac{1}{2\eta_2^2} = \frac{1 - \sqrt{\eta_1\eta_2}}{2\eta_2^2} = \frac{2\mu + \lambda}{\mu\lambda^2}
\end{align*}

Therefore, $L_1\approx \mu^3/\lambda + \mu/\lambda$ and $L_2 \approx \mu/\lambda + (2\mu + \lambda)/(\mu\lambda^2)$.
After standardizing we have that $Var\left[\widetilde{\x}\right] = \mu^3/\lambda = 1 \Rightarrow \lambda = \mu^3$, thus $\Ltilde_1^\std = 1 + 1/\mu^2$ and $\Ltilde_2^\std = (2 + \mu^2 + \mu^4)/\mu^6$.

\paragraph{Inverse Gamma distribution} This distribution has parameters $\alpha, \beta > 0$, sufficient statistics \hbox{$T_1(\x) = \log\x$}, $T_2(\x) = 1/\x$, and natural parameters $\eta_1 = -\alpha - 1, \eta_2 = -\beta$.
\begin{align*}
    \spd{\eta_1} A(\etab) &= \E{T_1(\x)} = \alpha - \log\beta + \log\Gamma(\alpha) -(1+\alpha)\psi(\alpha) \\
    &= -\eta_1 - 1 - \log(-\eta_2) + \log\Gamma(-\eta_1-1) + \eta_1\psi(-\eta_1-1)
\end{align*} %
\begin{align*}
    \spd[2]{\eta_1} \log p(\x; \etab) &= \spd{\eta_1} \left[-\eta_1 - 1 - \log(-\eta_2) + \log\Gamma(-\eta_1-1) + \eta_1\psi(-\eta_1-1) \right] \\
    &= -1 - \psi(-\eta_1 - 1) + \psi(-\eta_1 -1) - \eta_1 \psi^{(1)}(-\eta_1 - 1) \\
    &= -1 + (\alpha + 1) \psi^{(1)}(\alpha)
\end{align*}
\begin{align*}
    & \spd{\eta_2} A(\etab) = \E{T_2(\x)} = \E{1/\x} = \frac{\alpha}{\beta} = \frac{\eta_1+1}{\eta_2} \\
	&\spd[2]{\eta_2} \log p(\x; \etab) = \spd{\eta_2} \frac{\eta_1+1}{\eta_2} = -\frac{\eta_1 + 1}{\eta_2^2} = \frac{\alpha}{\beta^2} \\
	&\spd{\eta_2} \spd{\eta_1} \log p(\x; \etab) = \spd{\eta_1} \spd{\eta_2} \log p(\x; \etab) = \frac{1}{\eta_2} = \frac{1}{-\beta}
\end{align*}
Therefore, $L_1\approx \abs{1 - (\alpha + 1) \psi^{(1)}(\alpha)} + 1/\beta$ and $L_2 \approx 1/\beta + \alpha/\beta^2$. 
After standardizing we obtain 
\begin{align*}
    & Var\left[\widetilde{\x}\right] = \frac{\beta^2}{(\alpha-1)^2(\alpha-2)} = 1 \Rightarrow \beta^2 = (\alpha-1)^2(\alpha-2) \\
    & \Ltilde_2^\std = ((\alpha-1) \sqrt{\alpha-2} + \alpha) / ((\alpha - 1)^2 (\alpha-2)) \\
    & \Ltilde_1^\std = \abs{(\alpha+1)\psi^{(1)}(\alpha) - 1} + 1/((\alpha-1)\sqrt{\alpha-2})
\end{align*}

The interesting bit about these last two estimators is that both explode as they get closer to $2$, and both vanish as they get further from it, as it can be readily checked by plotting them.

\subsection{Scale-invariant smoothness of the Gamma distribution}

In section \ref{sec:mixed-data} it was introduced the concept of Gamma trick, which acts as a approximation for discrete distributions. Moreover, the discrete variables were assumed to take place in the natural numbers. The reason is that it is beneficial for this approximation that the original variable $\x$ is somewhat far from zero.

This statement it is justified by the following: the second derivative of a Gamma log-likelihood with respect to the first natural parameter, $\spd[2]{\eta_1} \log p(x; \eta)$, rapidly decreases as the data moves away from zero.

As computed before in Equation~\ref{eq:gamma-L1}, one part of $L_1$ is scale-invariant and has the form \hbox{$1 + (1 - \alpha) \psi^{(1)}(\alpha)$}.
Figure~\ref{fig:trigamma} shows a plot of this formula as a function of $\alpha$. 
It is easy to observe that as the shape parameter grows the value of (our approximation to) $L_1$ drastically decreases.

\begin{figure}[!hbtp]
    \centering
    \includegraphics[width=.8\textwidth, keepaspectratio]{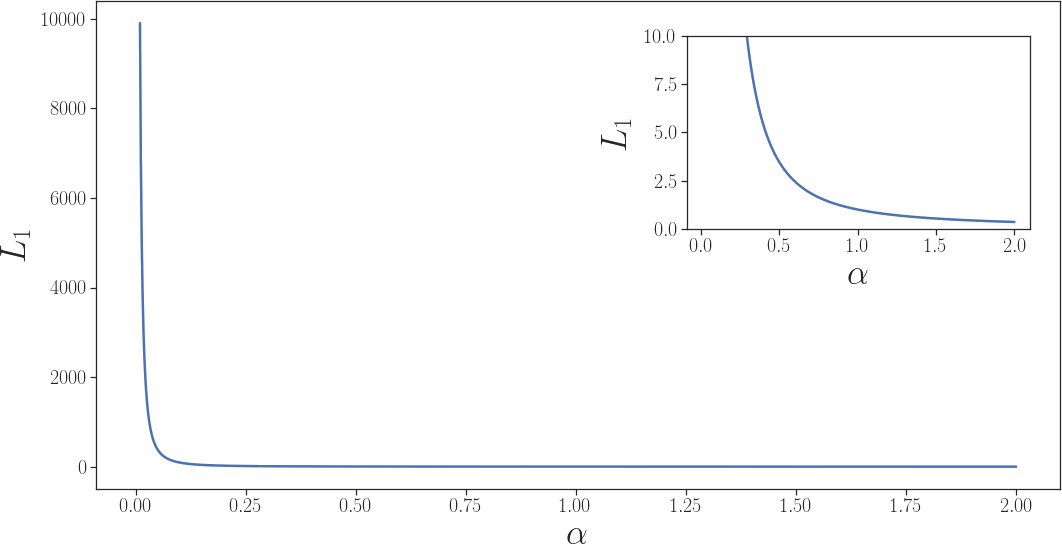}
    \caption{Plot of $L_1$ for the Gamma distribution.}
    \label{fig:trigamma}
\end{figure}

Finally, by supposing that discrete data are natural numbers, the mode is at least one, which in practice means that the value for $\alpha$ is bigger than $1$ (usually close to $10$), thus ensuring that the value of (our approximation to) $L_1$ mostly depends on the scale-dependent parameter $\beta$.

\section{Details on the experimental setup} \label{appendix:D}

\subsection{Missing imputation models}

Here we give a deeper description of the models used on the experiments. All of them have the form described in the problem statement (Section~\ref{sec:ps}), following the graphical model depicted in Figure~\ref{fig:graphical-model}.

\begin{figure}[h]
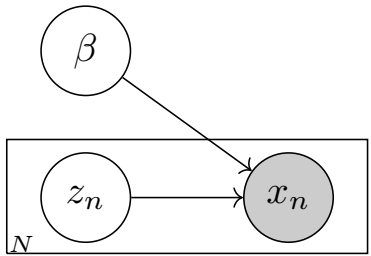

    \centering
    \includestandalone[keepaspectratio, width=0.3\linewidth]{graphical-model}
    \caption{Latent variable model describing the joint distribution of Section~\ref{sec:ps}.}
    \label{fig:graphical-model}
\end{figure}

\paragraph{Mixture model}
Following the form of the join distribution from Section~\ref{sec:ps}, the mixture model is fully described by:
\begin{itemize}
    \item Priors: $$p(\pi_n) = \mathcal{U}(K) \qquad p(\beta) = \Normal(0_K, I_K)$$
    \item Posteriors: $$q_\phi(\z_n) = Cat(\pi_n) \qquad q_{\phi}(\beta) = \Normal(\mu, \Sigma)$$
    \item Linking function: $$\eta(\z_n, \beta_d) = \z_n \beta_d$$
\end{itemize}

Where $\pi_n$ are $K$-dimensional vectors and $z_n$ are one-hot encoding vectors of size $K$.

To ensure that the parameters fulfil the domain restriction of each particular distribution, the following transformations are performed after the linking function is applied:

\begin{itemize}
    \item Greater than $l$: $$ {\eta}' = \operatorname{softplus}(\eta) + l + \num{1e-15}$$
    \item Smaller than $u$: $$ {\eta}' = -(\operatorname{softplus}(\eta) + u + \num{1e-15})$$
\end{itemize}

When it comes to experiments the only hyper-parameter for this model is the number of clusters, $K$. In particular, we use $K=5$ if the dataset is \textit{Breast}, \textit{Wine}, or \textit{spam}, and $K=10$ otherwise.

In order to implement the discrete latent parameters such that they can be trained via automatic differentiation, the latent categorical distribution is implemented using a GumbelSoftmax distribution~\citep{jang2016categorical} with a temperature that updates every $20$ epochs as:
\begin{equation*}
    temp = \max(0.001, e^{-0.001epoch})
\end{equation*}

\paragraph{Matrix factorization}
Similar to the mixture model, the matrix factorization model follows the same graphical model and it is (almost) fully described by:
\begin{itemize}
    \item Priors: $$p(\mu_n) = \Normal(0_K, I_K) \qquad p(\beta) = \Normal(0_K, I_K)$$
    \item Posteriors: $$q_\phi(\z_n) = \Normal(\mu_n, \sigma) \qquad q_{\phi}(\beta) = \Normal(\mu, \Sigma)$$
    \item Linking function: $$\eta(\z_n, \beta_d) = \z_n \beta_d$$
\end{itemize}

There some details that have to be noted. 
First, the variance of the local parameters, $\sigma$, is shared among instances and learnt as a deterministic parameter.
In the same way, only the first parameter, $\eta_1$, of each distribution is learnt following this scheme. The remaining parameters are learnt using gradient descent as deterministic parameters. %

The same transformations as in the mixture model are performed to the parameters in order to fulfil their particular domain requirements.

When it comes to experiments, the only hyper-parameter is the latent size, $K$. In particular, we set it automatically as half the number of dimensions of each dataset (before applying any trick to the data that may increase the number of dimensions).

\paragraph{Variational Auto-Encoder}
We follow the structure of a vanilla VAE~\citep{diederik2014auto} with the following components:
\begin{itemize}
    \item Encoder: 3-layer neural network with hyperbolic tangents as activation functions.
    \item Decoder: 4-layer neural network with ReLU as activation functions.
\end{itemize}

General notes:
\begin{itemize}
    \item We assume normal latent variables with a standard normal as prior.
    \item Hidden layers have 256 neurons.
    \item The latent size is set to the \SI{75}{\percent} of the data number of dimensions (before preprocessing).
    \item Layers are initialized using a Xavier uniform policy.
\end{itemize}

Specifics about the encoder:
\begin{itemize}
    \item As we have to avoid using the missing data (since it is going to be our test set), we implement an input-dropout layer as in \citet{hivae}.
    \item In order to guarantee a common input (and thus, a common well-behaved neural net) across all data scaling methods, we put a batch-normalization layer at the beginning of the encoder. Note that this does not interfere with the goal of this work, which is about the evaluation of the loss function.
    \item In order to obtain the distributional parameters of $\z_n$, $\mu_n$ and $\sigma_n$, we pass the result of the encoder through two linear layers, one for the mean and another for the log-scale. The latter is transformed to the scale via a softplus function.
\end{itemize}

Specifics about the decoder:
\begin{itemize}
    \item The decoder output size is set to the number of parameters to learn. Each one being transformed accordingly with softplus functions to fulfil their distributional restrictions, as done for the other models.
\end{itemize}

\subsection{Experimental setup}

For the experiments we train with Adam and a learning rate of \num{1e-3} for all models but matrix factorization, which is set to \num{1e-2}. Batch size is set to \num{1024} in all cases. We train for \num{400} epochs for the biggest datasets (\textit{letter}, \textit{Adult}, and \textit{defaultCredit}), \num{2000} epochs for the intermediate ones (\textit{Wine}, and \textit{spam}), and \num{3000} epochs for the smallest one (\textit{Breast}). Table \ref{tab:types} describes the types of data across datasets as well as their sizes.

\begin{table}[!hbtp]
	\centering
	\caption{Types of random variables per dimensions and number of samples.} \label{tab:types}
	\begin{tabular}{ccccccc} \toprule
		\textbf{Dataset} & Credit & Adult & Wine & spam & Letter & Breast \\ \midrule
		Continuous & \num{13} & \num{3} & \num{11} & \num{57} & \num{0} & \num{0} \\
		Poisson & \num{1} & \num{2} & \num{1} & \num{0} & \num{16} & \num{9} \\
		Categorical & \num{10} & \num{7} & \num{1} & \num{1} & \num{1} & \num{1} \\ \midrule
		\textbf{No. samples} & \num{30000} & \num{32000} & \num{7000} & \num{4600} & \num{20000} & \num{700} \\ \bottomrule
	\end{tabular}
\end{table}

We automate the process of choosing a likelihood based on basic properties of the data:
\begin{align*}
    &\textbf{Real-valued:} & \x_d\sim \mathcal{N}(\mu, \sigma) \\
    &\textbf{Positive real-valued:} &\x_d\sim \log \mathcal{N}(\mu, \sigma) \\
    &\textbf{Count:} &\x_d\sim \operatorname{Poiss}(\lambda) \\ 
    &\textbf{Binary:} &\x_d\sim \operatorname{Bern}(p) \\
    &\textbf{Categorical:} &\x_d\sim \operatorname{Cat}(\pi_1, \pi_2, \dots, \pi_K).
\end{align*}

When it comes to evaluation we use missing imputation error, that is, for the imputed missing values that are numerical we compute the normalized root mean squared error (NRMSE),
\begin{equation}
    err(d) = \frac{1}{N}\frac{\norm[2]{\x_d - \hat{\x}_d}}{\max{(\x_d)} - \min{(\x_d)}},
\end{equation}
where $\hat{\x}$ is the value inferred by the model, and in the case of nominal data we compute the error rate, i.e.,
\begin{equation}
    err(d) = \frac{1}{N} \sum_{n=1}^N I(\x_{n,d} \neq \hat{\x}_{n,d}).
\end{equation}
The final metric is the mean across dimensions, $err = \frac{1}{D} \sum_d err(d)$.

\section{Additional experimental results} \label{appendix:E}

\begin{figure}
	\centering
	\subcaptionbox*{}{\includegraphics[width=\linewidth, keepaspectratio]{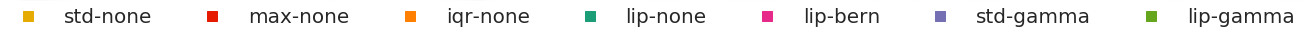}} \\
	\vspace{-\baselineskip}
	\subcaptionbox*{}{\includegraphics[width=.32\linewidth, keepaspectratio]{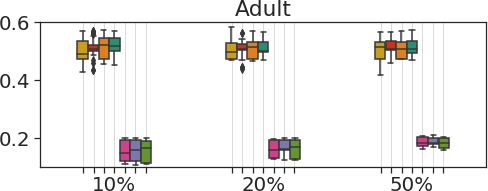}} \hfill %
	\subcaptionbox*{}{\includegraphics[width=.32\linewidth, keepaspectratio]{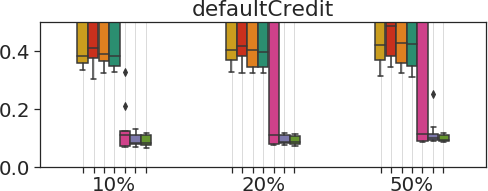}} \hfill %
	\subcaptionbox*{}{\includegraphics[width=.32\linewidth, keepaspectratio]{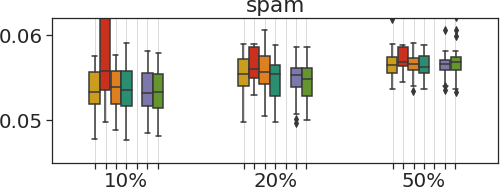}} \\ %
	\vspace{-\baselineskip}
	\subcaptionbox*{}{\includegraphics[width=.32\linewidth, keepaspectratio]{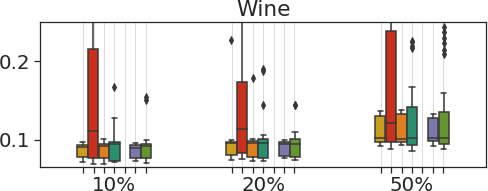}} \hfill %
	\subcaptionbox*{}{\includegraphics[width=.32\linewidth, keepaspectratio]{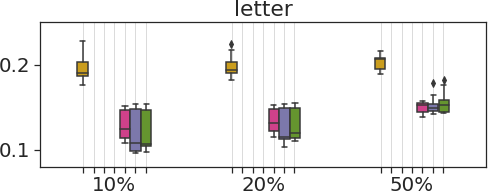}} \hfill %
	\subcaptionbox*{}{\includegraphics[width=.32\linewidth, keepaspectratio]{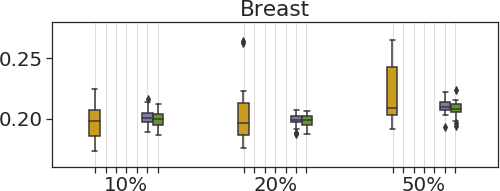}} %
	\vspace{-\baselineskip}
	\caption{Missing imputation error across different datasets and missing-values percentages. Lower is better.} \label{fig:boxplots-2}
\end{figure} %

\begin{figure}
    \centering
    \includegraphics[width=\textwidth, keepaspectratio]{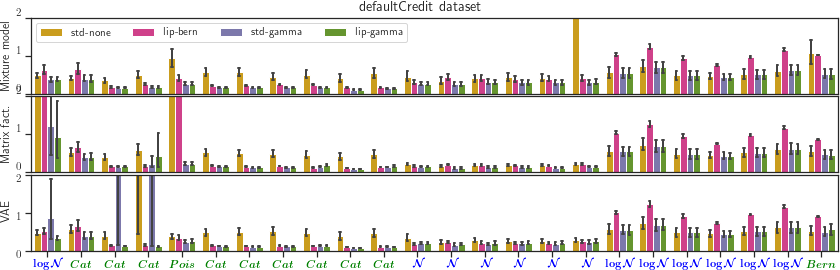}
    \caption{Per-dimension normalized missing imputation error on the \emph{defaultCredit} dataset (lower is better).}
    \label{fig:barplot-default}
\end{figure}

\begin{figure}
    \centering
    \includegraphics[width=\textwidth, keepaspectratio]{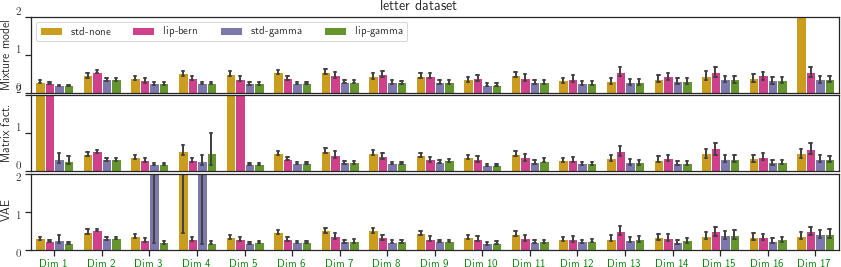}
    \caption{Per-dimension normalized missing imputation error on the \emph{letter} dataset (lower is better).}
    \label{fig:barplot-letter}
\end{figure}

In this section we show complementary results from the experiments performed in the main paper. First, Figure~\ref{fig:boxplots-2} depicts the same data as Figure~\ref{fig:boxplots} of the main paper, but averaging across models instead of missing-values percentages. 
Second, we plot in Figures~\ref{fig:barplot-default} and~\ref{fig:barplot-letter} per-dimension barplots of the normalized missing imputation error as in Figure~\ref{fig:stars-hetero}, now for the \emph{defaultCredit} and \emph{letter} datasets, respectively. These figures further validate the argument of \lipgamma not overlooking any variable, unlike \lipbern and \stdgamma.
Finally, we present the results in tabular form, divided by type of variable (discrete vs. continuous) and type of model (mixture model, matrix factorization and VAE). Tables \ref{tab:10percent}, \ref{tab:20percent}, and \ref{tab:50percent} show the results obtained with a \SI{10}{\percent}, \SI{20}{\percent}, and \SI{50}{\percent} of missing values, respectively. Major differences have been colored to ease their reading.

As discussed in Section \ref{sec:exps}, applying Lipschitz standardization results in an improvement on the imputation error across all datasets, being in the worst case as good as the best of the other methods. 
We can also observe how this improvement mainly manifests on discrete random variables when the Bernoulli and Gamma tricks are applied, and that the effect of data scaling is less noticeable as the expressiveness of the model increases. 
There are cases, like in the \emph{Adult} dataset, where there is a trade-off on learning the discrete dimensions and worsening the results on continuous dimensions.
However, the case where properly learning the discrete distributions translates to an improvement on all dimensions can also occur, as in the \emph{defaultCredit} dataset.

Finally, there is an important aspect that qualitatively differentiates \lipgamma from \lipbern and \stdgamma. The consequence of Lipschitz standardizing every dimension is obtaining the more balanced learning that we aim for, and in cases with high heterogeneity, such as \emph{defaultCredit} and \emph{Adult}, the stability and robustness of the algorithm increases. A clear example of this can be seen by checking the evolution of the \emph{defaultCredit} dataset on Tables~\ref{tab:10percent}, \ref{tab:20percent}, and \ref{tab:50percent}.
\textit{It is worth-noting that \lipgamma keeps achieving consistent results even under a half missing-data regime, which is impressive.}

\sisetup{omit-uncertainty=false}

\newpage

\begin{table}[!hbtp]
    \centering
    \caption{Missing imputation error with a \SI{10}{\percent} of missing data.} \label{tab:10percent}
    \resizebox{\textwidth}{!}{
        \begin{tabular}{llcccccc} \toprule
        \multicolumn{2}{l}{} & \multicolumn{3}{c}{Discrete} & \multicolumn{3}{c}{Continuous} \\
        \multicolumn{2}{c}{\textit{Imputation error}}  & \textit{Mixture} & \textit{Matrix fact.} & \textit{VAE} & \textit{Mixture} & \textit{Matrix fact.} & \textit{VAE}\\ \midrule
		\multirow{7}{*}{defaultCredit} & \stdnone & \num{0.770 (028)} & \red{\num{4.448 (7)}} & \num{0.712 (024)} & \num{0.055 (001)} & $\red{\infty}$ & \num{0.042 (003)} \\
 & \maxnone & \num{0.773 (022)} & $\red{\infty}$ & \num{0.720 (055)} & \num{0.134 (051)} & \num{0.056 (002)} & \num{0.038 (002)} \\
 & \iqrnone & \num{0.777 (025)} & \red{\num{8.486 (22)}} & \num{0.719 (036)} & \num{0.058 (009)} & $\red{\infty}$ & \num{0.044 (010)} \\
 & \lipnone & \num{0.775 (019)} & \num{0.803 (115)} & \num{0.705 (036)} & \num{0.054 (001)} & $\red{\infty}$ & \num{0.040 (001)} \\
 & \lipbern & \green{\num{0.195 (004)}} & \green{\num{0.133 (001)}} & \green{\num{0.123 (002)}} & \green{\num{0.044 (002)}} & $\red{\infty}$ & \green{\num{0.030 (001)}} \\
 & \stdgamma & \green{\num{0.189 (005)}} & \green{\num{0.143 (003)}} & \green{\num{0.123 (006)}} & \green{\num{0.045 (001)}} & \num{0.043 (032)} & \red{\num{0.126 (280)}} \\
 & \lipgamma & \green{\num{0.189 (005)}} & \num{0.144 (002)} & \green{\num{0.117 (009)}} & \green{\num{0.045 (001)}} & \red{\num{0.118 (251)}} & \green{\num{0.033 (002)}} \\ \midrule
\multirow{7}{*}{Adult} & \stdnone & \num{0.600 (002)} & \num{0.622 (052)} & \num{0.706 (022)} & \num{0.087 (001)} & \num{0.081 (001)} & \num{0.071 (002)} \\
 & \maxnone & \num{0.645 (003)} & \num{0.618 (051)} & \num{0.694 (037)} & \num{0.089 (000)} & \num{0.089 (000)} & \num{0.078 (005)} \\
 & \iqrnone & \num{0.601 (004)} & \num{0.671 (038)} & \num{0.702 (036)} & \num{0.087 (001)} & \num{0.081 (001)} & \num{0.072 (003)} \\
 & \lipnone & \num{0.639 (006)} & \num{0.651 (047)} & \num{0.713 (019)} & \num{0.088 (001)} & \num{0.082 (003)} & \num{0.071 (002)} \\
 & \lipbern & \green{\num{0.231 (004)}} & \green{\num{0.168 (002)}} & \green{\num{0.130 (005)}} & \num{0.087 (003)} & \num{0.094 (003)} & \num{0.073 (005)} \\
 & \stdgamma & \green{\num{0.229 (004)}} & \green{\num{0.182 (003)}} & \green{\num{0.125 (003)}} & \num{0.087 (003)} & \num{0.087 (003)} & \red{\num{0.503 (1)}} \\
 & \lipgamma & \green{\num{0.228 (004)}} & \green{\num{0.188 (006)}} & \green{\num{0.127 (015)}} & \num{0.087 (003)} & \num{0.097 (008)} & \num{0.085 (007)} \\ \midrule
\multirow{6}{*}{Wine} & \stdnone & \num{0.099 (005)} & \num{0.090 (002)} & \num{0.089 (008)} & \num{0.093 (001)} & \num{0.198 (337)} & \num{0.073 (002)} \\
 & \maxnone & \num{0.110 (007)} & \red{\num{0.352 (110)}} & \num{0.114 (063)} & \num{0.111 (001)} & \num{0.274 (075)} & \num{0.069 (000)} \\
 & \iqrnone & \num{0.099 (005)} & \num{0.092 (002)} & \num{0.086 (008)} & \num{0.093 (001)} & \num{0.148 (170)} & \num{0.071 (003)} \\
 & \lipnone & \num{0.099 (004)} & \num{0.097 (005)} & \num{0.089 (007)} & \num{0.093 (001)} & \num{0.287 (534)} & \num{0.069 (001)} \\
 & \stdgamma & \num{0.099 (003)} & \num{0.090 (002)} & \num{0.087 (005)} & \num{0.092 (001)} & \num{0.208 (376)} & \num{0.073 (001)} \\
 & \lipgamma & \num{0.100 (004)} & \num{0.092 (003)} & \num{0.088 (008)} & \num{0.093 (001)} & \red{\num{0.476 (1)}} & \num{0.071 (003)} \\ \midrule
\multirow{6}{*}{spam} & \stdnone & \num{0.144 (021)} & \num{0.080 (007)} & \num{0.094 (012)} & \num{0.054 (001)} & \num{0.054 (001)} & \num{0.050 (002)} \\
 & \maxnone & \num{0.158 (018)} & \num{0.081 (012)} & \red{\num{0.232 (122)}} & \num{0.054 (001)} & \num{0.054 (001)} & $\red{\infty}$ \\
 & \iqrnone & \num{0.149 (022)} & \num{0.081 (007)} & \num{0.086 (016)} & \num{0.054 (001)} & \num{0.054 (001)} & $\red{\infty}$ \\
 & \lipnone & \num{0.143 (022)} & \num{0.082 (006)} & \num{0.085 (010)} & \num{0.054 (001)} & \num{0.054 (001)} & \num{0.050 (003)} \\
 & \stdgamma & \num{0.167 (033)} & \num{0.082 (008)} & \num{0.090 (010)} & \num{0.054 (001)} & \num{0.054 (001)} & \num{0.050 (001)} \\
 & \lipgamma & \num{0.165 (035)} & \num{0.082 (008)} & \num{0.088 (015)} & \num{0.054 (001)} & \num{0.054 (001)} & \num{0.050 (001)} \\ \midrule
\multirow{4}{*}{Letter} & \stdnone & \num{0.210 (008)} & \num{0.190 (001)} & \num{0.183 (005)} & \minusmark & \minusmark & \minusmark \\
 & \lipbern & \green{\num{0.149 (002)}} & \num{0.125 (000)} & \num{0.112 (002)} & \minusmark & \minusmark & \minusmark \\
 & \stdgamma & \green{\num{0.150 (002)}} & \green{\num{0.108 (001)}} & \green{\num{0.098 (000)}} & \minusmark & \minusmark & \minusmark \\
 & \lipgamma & \green{\num{0.149 (002)}} & \green{\num{0.106 (001)}} & \green{\num{0.103 (003)}} & \minusmark & \minusmark & \minusmark \\ \midrule
\multirow{3}{*}{Breast} & \stdnone & \num{0.198 (005)} & \num{0.212 (006)} & \green{\num{0.183 (006)}} & \minusmark & \minusmark & \minusmark \\
 & \stdgamma & \num{0.201 (005)} & \num{0.200 (007)} & \num{0.201 (007)} & \minusmark & \minusmark & \minusmark \\
 & \lipgamma & \num{0.200 (005)} & \num{0.199 (006)} & \num{0.200 (007)} & \minusmark & \minusmark & \minusmark \\ \bottomrule
        \end{tabular}
    }
\end{table}

\newpage
\begin{table}[!hbtp]
    \centering
    \caption{Missing imputation error with a \SI{20}{\percent} of missing data.} \label{tab:20percent}
    \resizebox{\textwidth}{!}{
        \begin{tabular}{llcccccc} \toprule
        \multicolumn{2}{l}{} & \multicolumn{3}{c}{Discrete} & \multicolumn{3}{c}{Continuous} \\
        \multicolumn{2}{r}{\textit{Imputation error}}  & \textit{Mixture} & \textit{Matrix fact.} & \textit{VAE} & \textit{Mixture} & \textit{Matrix fact.} & \textit{VAE}\\ \midrule
        \multirow{7}{*}{defaultCredit} & \stdnone & \num{0.805 (023)} & $\red{\infty}$ & \num{0.707 (034)} & \num{0.055 (001)} & $\red{\infty}$ & \num{0.046 (007)} \\
 & \maxnone & \num{0.805 (018)} & $\red{\infty}$ & \num{0.739 (047)} & \num{0.110 (015)} & \num{0.056 (003)} & \num{0.038 (002)} \\
 & \iqrnone & \num{0.803 (021)} & \red{\num{9.938 (15)}} & \num{0.689 (016)} & \num{0.054 (001)} & $\red{\infty}$ & \num{0.042 (002)} \\
 & \lipnone & \num{0.807 (017)} & \red{\num{3.957 (4)}} & \num{0.686 (018)} & \num{0.053 (001)} & $\red{\infty}$ & \num{0.042 (003)} \\
 & \lipbern & \green{\num{0.192 (002)}} & \red{\num{0.400 (461)}} & \green{\num{0.133 (001)}} & \green{\num{0.044 (001)}} & $\red{\infty}$ & \green{\num{0.030 (001)}} \\
 & \stdgamma & \green{\num{0.186 (004)}} & \green{\num{0.146 (002)}} & \green{\num{0.133 (007)}} & \green{\num{0.045 (001)}} & \green{\num{0.039 (019)}} & \green{\num{0.037 (003)}} \\
 & \lipgamma & \green{\num{0.185 (003)}} & \green{\num{0.147 (001)}} & \green{\num{0.124 (002)}} & \green{\num{0.046 (001)}} & \green{\num{0.036 (007)}} & \green{\num{0.036 (003)}} \\ \midrule
\multirow{7}{*}{Adult} & \stdnone & \num{0.602 (004)} & \num{0.633 (023)} & \num{0.701 (024)} & \num{0.089 (001)} & \num{0.084 (002)} & \num{0.082 (034)} \\
 & \maxnone & \num{0.644 (002)} & \num{0.630 (054)} & \num{0.671 (040)} & \num{0.090 (001)} & \num{0.090 (001)} & \num{0.073 (001)} \\
 & \iqrnone & \num{0.601 (003)} & \num{0.656 (030)} & \num{0.702 (024)} & \num{0.089 (001)} & \num{0.084 (003)} & \num{0.071 (001)} \\
 & \lipnone & \num{0.634 (004)} & \num{0.648 (030)} & \num{0.686 (035)} & \num{0.090 (001)} & \num{0.083 (001)} & \num{0.072 (002)} \\
 & \lipbern & \green{\num{0.231 (002)}} & \green{\num{0.180 (004)}} & \green{\num{0.146 (001)}} & \num{0.087 (002)} & \num{0.094 (002)} & \num{0.077 (003)} \\
 & \stdgamma & \green{\num{0.230 (003)}} & \green{\num{0.188 (005)}} & \num{0.163 (033)} & \num{0.087 (002)} & \num{0.087 (002)} & $\red{\infty}$ \\
 & \lipgamma & \green{\num{0.230 (002)}} & \green{\num{0.195 (007)}} & \green{\num{0.141 (002)}} & \num{0.087 (002)} & \num{0.096 (007)} & \num{0.084 (004)} \\ \midrule
\multirow{6}{*}{Wine} & \stdnone & \num{0.107 (007)} & \num{0.099 (001)} & \num{0.089 (002)} & \num{0.094 (001)} & \num{0.113 (048)} & \num{0.076 (002)} \\
 & \maxnone & \num{0.118 (009)} & \red{\num{0.281 (120)}} & \red{\num{0.125 (048)}} & \num{0.112 (000)} & \red{\num{0.235 (069)}} & \num{0.073 (000)} \\
 & \iqrnone & \num{0.105 (006)} & \num{0.101 (002)} & \num{0.087 (004)} & \num{0.094 (001)} & \num{0.109 (029)} & \num{0.074 (001)} \\
 & \lipnone & \num{0.106 (006)} & \num{0.103 (006)} & \num{0.090 (007)} & \num{0.094 (001)} & \num{0.159 (101)} & \num{0.073 (001)} \\
 & \stdgamma & \num{0.101 (006)} & \num{0.099 (002)} & \num{0.092 (004)} & \num{0.093 (001)} & \num{0.121 (078)} & \num{0.076 (002)} \\
 & \lipgamma & \num{0.103 (006)} & \num{0.099 (002)} & \num{0.094 (006)} & \num{0.094 (001)} & \red{\num{0.240 (394)}} & \num{0.073 (001)} \\ \midrule
\multirow{6}{*}{spam} & \stdnone & \num{0.186 (035)} & \num{0.088 (012)} & \num{0.094 (007)} & \num{0.055 (001)} & \num{0.055 (001)} & \num{0.060 (018)} \\
 & \maxnone & \num{0.176 (025)} & \num{0.089 (014)} & \red{\num{0.222 (097)}} & \num{0.055 (001)} & \num{0.055 (001)} & $\red{\infty}$ \\
 & \iqrnone & \num{0.185 (034)} & \num{0.086 (012)} & \num{0.093 (011)} & \num{0.055 (001)} & \num{0.055 (001)} & $\red{\infty}$ \\
 & \lipnone & \num{0.180 (033)} & \num{0.087 (009)} & \num{0.098 (012)} & \num{0.055 (001)} & \num{0.055 (001)} & \green{\num{0.052 (004)}} \\
 & \stdgamma & \green{\num{0.168 (022)}} & \num{0.099 (009)} & \num{0.100 (011)} & \num{0.055 (001)} & \num{0.055 (001)} & $\red{\infty}$ \\
 & \lipgamma & \green{\num{0.169 (030)}} & \num{0.095 (008)} & \num{0.096 (008)} & \num{0.055 (001)} & \num{0.055 (001)} & \green{\num{0.051 (001)}} \\ \midrule
\multirow{4}{*}{Letter} & \stdnone & \red{\num{0.210 (007)}} & \red{\num{0.193 (000)}} & \red{\num{0.188 (004)}} & \minusmark & \minusmark & \minusmark \\
 & \lipbern & \green{\num{0.150 (001)}} & \green{\num{0.131 (000)}} & \green{\num{0.120 (002)}} & \minusmark & \minusmark & \minusmark \\
 & \stdgamma & \green{\num{0.151 (001)}} & \green{\num{0.114 (001)}} & \green{\num{0.111 (003)}} & \minusmark & \minusmark & \minusmark \\
 & \lipgamma & \green{\num{0.151 (001)}} & \green{\num{0.112 (001)}} & \green{\num{0.120 (003)}} & \minusmark & \minusmark & \minusmark \\ \midrule
\multirow{3}{*}{Breast} & \stdnone & \num{0.196 (004)} & \num{0.224 (021)} & \num{0.183 (004)} & \minusmark & \minusmark & \minusmark \\
 & \stdgamma & \num{0.196 (006)} & \num{0.200 (002)} & \num{0.201 (004)} & \minusmark & \minusmark & \minusmark \\
 & \lipgamma & \num{0.197 (005)} & \num{0.200 (002)} & \num{0.198 (006)} & \minusmark & \minusmark & \minusmark \\ \bottomrule
        \end{tabular}
    }
\end{table}

\newpage
\begin{table}[!hbtp]
    \centering
    \caption{Missing imputation error with a \SI{50}{\percent} of missing data.} \label{tab:50percent}
    \resizebox{\textwidth}{!}{
        \begin{tabular}{llcccccc} \toprule
        \multicolumn{2}{l}{} & \multicolumn{3}{c}{Discrete} & \multicolumn{3}{c}{Continuous} \\
        \multicolumn{2}{r}{\textit{Imputation error}}  & \textit{Mixture} & \textit{Matrix fact.} & \textit{VAE} & \textit{Mixture} & \textit{Matrix fact.} & \textit{VAE}\\ \midrule
        \multirow{7}{*}{defaultCredit} & \stdnone & \num{0.829 (037)} & $\red{\infty}$ & \num{0.709 (045)} & $\red{\infty}$ & $\red{\infty}$ & \num{0.046 (003)} \\
 & \maxnone & \num{0.833 (025)} & $\red{\infty}$ & \num{0.764 (051)} & $\red{\infty}$ & \num{0.057 (001)} & \num{0.045 (004)} \\
 & \iqrnone & \num{0.831 (024)} & $\red{\infty}$ & \num{0.709 (028)} & $\red{\infty}$ & $\red{\infty}$ & \num{0.045 (003)} \\
 & \lipnone & \num{0.838 (041)} & $\red{\infty}$ & \num{0.690 (031)} & $\red{\infty}$ & $\red{\infty}$ & \num{0.044 (002)} \\
 & \lipbern & \green{\num{0.194 (002)}} & $\red{\infty}$ & \green{\num{0.154 (001)}} & \num{0.044 (001)} & $\red{\infty}$ & \green{\num{0.033 (000)}} \\
 & \stdgamma & \green{\num{0.191 (003)}} & \green{\num{0.160 (002)}} & \green{\num{0.163 (007)}} & \num{0.046 (001)} & \red{\num{0.165 (295)}} & \num{0.037 (002)} \\
 & \lipgamma & \green{\num{0.191 (003)}} & \green{\num{0.161 (002)}} & \green{\num{0.154 (009)}} & \num{0.046 (001)} & \green{\num{0.040 (012)}} & \num{0.037 (002)} \\ \midrule
\multirow{7}{*}{Adult} & \stdnone & \num{0.600 (001)} & \num{0.667 (052)} & \num{0.666 (057)} & \num{0.088 (000)} & \num{0.086 (002)} & \num{0.071 (003)} \\
 & \maxnone & \num{0.642 (001)} & \num{0.654 (048)} & \num{0.681 (041)} & \num{0.089 (000)} & \num{0.089 (000)} & \num{0.075 (003)} \\
 & \iqrnone & \num{0.600 (001)} & \num{0.668 (033)} & \num{0.685 (039)} & \num{0.088 (000)} & \num{0.088 (006)} & \num{0.072 (004)} \\
 & \lipnone & \num{0.633 (005)} & \num{0.675 (030)} & \num{0.666 (052)} & \num{0.089 (000)} & \num{0.085 (002)} & \num{0.072 (001)} \\
 & \lipbern & \green{\num{0.242 (002)}} & \green{\num{0.210 (001)}} & \green{\num{0.197 (005)}} & \num{0.087 (001)} & \num{0.096 (001)} & \num{0.084 (002)} \\
 & \stdgamma & \green{\num{0.239 (002)}} & \green{\num{0.209 (001)}} & \num{0.210 (017)} & \num{0.087 (000)} & \num{0.090 (008)} & \num{0.098 (009)} \\
 & \lipgamma & \green{\num{0.239 (002)}} & \green{\num{0.212 (001)}} & \green{\num{0.191 (003)}} & \num{0.087 (000)} & \num{0.097 (003)} & \num{0.081 (003)} \\ \midrule
\multirow{6}{*}{Wine} & \stdnone & \num{0.122 (005)} & \num{0.145 (004)} & \num{0.118 (010)} & \num{0.098 (002)} & \num{0.131 (002)} & \num{0.092 (003)} \\
 & \maxnone & \num{0.155 (020)} & \red{\num{0.264 (102)}} & \num{0.131 (020)} & \num{0.116 (001)} & \num{0.273 (038)} & \num{0.087 (001)} \\
 & \iqrnone & \num{0.122 (005)} & \num{0.148 (005)} & \num{0.116 (007)} & \num{0.098 (002)} & \num{0.132 (002)} & \num{0.091 (002)} \\
 & \lipnone & \num{0.122 (006)} & \num{0.159 (014)} & \num{0.113 (009)} & \num{0.098 (002)} & \num{0.181 (043)} & \num{0.088 (002)} \\
 & \stdgamma & \num{0.121 (006)} & \num{0.134 (004)} & \num{0.121 (006)} & \num{0.097 (001)} & \num{0.129 (001)} & \num{0.091 (002)} \\
 & \lipgamma & \num{0.121 (005)} & \num{0.140 (007)} & \num{0.110 (004)} & \num{0.098 (001)} & \num{0.201 (053)} & \num{0.089 (002)} \\ \midrule
\multirow{6}{*}{spam} & \stdnone & \num{0.188 (027)} & \num{0.118 (004)} & \num{0.144 (011)} & \num{0.055 (000)} & \num{0.055 (000)} & $\red{\infty}$ \\
 & \maxnone & \num{0.183 (018)} & \num{0.127 (003)} & \red{\num{0.328 (132)}} & \num{0.055 (000)} & \num{0.055 (000)} & $\red{\infty}$ \\
 & \iqrnone & \num{0.187 (027)} & \num{0.118 (003)} & \num{0.149 (017)} & \num{0.055 (000)} & \num{0.055 (000)} & $\red{\infty}$ \\
 & \lipnone & \num{0.188 (029)} & \num{0.122 (005)} & \num{0.147 (011)} & \num{0.055 (000)} & \num{0.056 (002)} & \num{0.053 (000)} \\
 & \stdgamma & \num{0.195 (048)} & \num{0.129 (006)} & \num{0.149 (013)} & \num{0.055 (000)} & \num{0.055 (000)} & $\red{\infty}$ \\
 & \lipgamma & \num{0.192 (049)} & \num{0.130 (007)} & \num{0.149 (027)} & \num{0.055 (000)} & \num{0.056 (001)} & \num{0.053 (000)} \\ \midrule
\multirow{4}{*}{Letter} & \stdnone & \num{0.210 (004)} & \num{0.207 (000)} & \num{0.192 (002)} & \minusmark & \minusmark & \minusmark \\
 & \lipbern & \green{\num{0.153 (001)}} & \num{0.155 (001)} & \green{\num{0.143 (002)}} & \minusmark & \minusmark & \minusmark \\
 & \stdgamma & \green{\num{0.154 (001)}} & \green{\num{0.145 (000)}} & \num{0.154 (010)} & \minusmark & \minusmark & \minusmark \\
 & \lipgamma & \green{\num{0.153 (001)}} & \green{\num{0.144 (000)}} & \num{0.165 (008)} & \minusmark & \minusmark & \minusmark \\ \midrule
\multirow{3}{*}{Breast} & \stdnone & \num{0.207 (004)} & \red{\num{0.251 (008)}} & \num{0.201 (005)} & \minusmark & \minusmark & \minusmark \\
 & \stdgamma & \num{0.208 (007)} & \num{0.210 (004)} & \num{0.213 (005)} & \minusmark & \minusmark & \minusmark \\
 & \lipgamma & \num{0.209 (006)} & \num{0.211 (004)} & \num{0.206 (006)} & \minusmark & \minusmark & \minusmark \\ \bottomrule
        \end{tabular}
    }
\end{table}

\sisetup{omit-uncertainty=false}

\end{document}